\newcommand{\defeq}{\vcentcolon=}
\theoremstyle{definition}
\def\thmref#1{Theorem~\ref{#1}}
\def\lemref#1{Lemma~\ref{#1}}
\def\defref#1{Definition~\ref{#1}}
\def\corref#1{Corollary~\ref{#1}}
\def\Figref#1{Figure~\ref{#1}}
\def\secref#1{section~\ref{#1}}
\def\Secref#1{Section~\ref{#1}}
\def\eqref#1{equation~\ref{#1}}
\def\algref#1{algorithm~\ref{#1}}
\def\Algref#1{Algorithm~\ref{#1}}
\def\1{\bm{1}}
\DeclareMathAlphabet{\mathsfit}{\encodingdefault}{\sfdefault}{m}{sl}
\SetMathAlphabet{\mathsfit}{bold}{\encodingdefault}{\sfdefault}{bx}{n}
\newcommand{\E}{\mathbb{E}}
\newcommand{\R}{\mathbb{R}}
\DeclareMathOperator*{\argmax}{arg\,max}
\DeclareMathOperator*{\argmin}{arg\,min}
\theoremstyle{plain}
\newtheorem{theorem}{Theorem}[section]
\newtheorem{lemma}[theorem]{Lemma}
\newtheorem{corollary}[theorem]{Corollary}
\theoremstyle{definition}
\newtheorem{definition}[theorem]{Definition}
\theoremstyle{remark}
\title{Learning Linear Attention in Polynomial Time}
\author{%
  Morris Yau\\
  MIT CSAIL\\
  \texttt{morrisy@mit.edu} \\
  \And
  Ekin Aky\"urek  \\
  MIT CSAIL\\
  \texttt{akyurek@mit.edu} \\
  \And
  Jiayuan Mao  \\
  MIT CSAIL\\
  \texttt{jiayuanm@mit.edu} \\
  \And
  Joshua B. Tenenbaum  \\
  MIT Brain and Cognitive Sciences\\
  \texttt{jbt@mit.edu} \\
  \And 
  Stefanie Jegelka  \\
  TUM Munich, MCML, MIT CSAIL\\
  \texttt{stefje@mit.edu} \\
  \And
  Jacob Andreas  \\
  MIT CSAIL\\
  \texttt{jda@mit.edu} \\
  }
\begin{document}

\maketitle

\begin{abstract}
  Previous research has explored the expressivity of Transformer models in simulating Boolean circuits or Turing machines. However, the efficient learnability of Transformers from data has remained an open question.  Our study addresses this gap by providing the first polynomial-time learnability results (specifically strong, agnostic PAC learning) for single-layer Transformers with linear attention.  We show that learning the optimal multi head linear attention can be recast as finding the optimal kernel predictor in a suitably defined RKHS.  Moving to generalization, we construct an algorithm that, given a dataset, checks in polynomial time whether the set of best fit multi head linear attention networks on this data all perform an identical computation--a powerful notion for out of distribution generalization.  We empirically validate our theoretical findings on several canonical tasks: learning random linear attention networks, key--value associations, and learning to execute finite automata. Our findings bridge a critical gap between theoretical expressivity and learnability of Transformer models.
\end{abstract}
\section{Introduction}
Transformers are the dominant neural architecture used in language modeling.
A growing body of work seeks to explain the behavior of trained Transformers and characterize their learnability \citep{perez2019turing,edelman2022inductive,hahn2020theoretical,merrill2023parallelism,merrill2022saturated,merrill2021power,liu2022transformers,feng2023towards,edelman2022,wcm21,zhang2024,trauger2023,chen2024}.  While a large body of work shows that Transformers are \emph{expressive} enough to implement important models of computation, it remains an open question whether these constructions may be efficiently \emph{learned}. Even verifying that a trained model has successfully learned a computational procedure (uniform circuit family) has remained challenging.

Existing work shows positive results on how Transformer-like architectures can express diverse computations, including simulating universal Turing machines~\citep{li2024chain}, evaluating sentences of first-order logic~\citep{barcelo2020logical}, and recognizing various formal languages~\citep{strobl2024formal}.  On the other hand, results on learnability in polynomial time and samples with provable guarantees tend to rely on strong data-generating assumptions, e.g., Gaussian data, etc. \citep{zhang2023,jelassi2022,tian2023,oymak2023,fu2023,tarzanagh2024,deora2023}.  This brings us to our first motivating question.   

\textit{Is there an efficient algorithm in time and samples that learns the optimal parameters of a class of Transformer models for any dataset?}  

In this paper, we establish the strong, agnostic PAC-learnability of linear attention.  Linear attention variants (kernel, gated, flash, etc.) \cite{yang2025, YangWSPK24}, mLSTM in xLSTM \cite{beck2024xlstm}, Retnet \cite{sun2023retentive}, Mamba2 \cite{dao2024transformersssmsgeneralizedmodels}, DeltaNet \cite{schlag2021linear}) have recently matched or outperformed softmax attention in language and vision benchmarks, underscoring the practical value of their theory; \citealp{ahn2024}, \citealp{katharopoulos2020}).  Despite its name, linear attention is not linear and its loss landscape is nonconvex.  We focus our analysis on multi-head linear attention networks, or MHLAs for regression tasks.  An MHLA is parameterized by two matrices $(V_h,Q_h)$ for each of $H$ heads as such $\Theta = \{(V_h, Q_h)\}_{h \in [H]}$. A one layer MHLA computes $Y = \sum_{h \in [H]}V_hZ(Z^TQ_hZ)$.  Here key and query matrices are fused into one, as they multiply one another directly.  

We first show that the computation performed by MHLAs 
can be reformulated as an elementwise product between two larger matrices $\langle W, \mathcal{X}(Z) \rangle$, where $W = \sum_{h \in [H]}\text{flatten}(V_h) \text{flatten}(Q_h)^T $ and $\mathcal{X}(Z)$ is a fixed cubic polynomial function of $Z$. Consequently, optimizing over the class of $H$-head MHLA models is equivalent to optimizing over the class of rank-$H$ matrices $W$.  Furthermore, in the full-rank space of $d^2 \times d^2$ matrices, optimization of $W$ can be performed via linear regression with time polynomial in the inverse target error and size of the dataset. Finally, decomposing an optimal $W$ via SVD recovers an MHLA model with no more than $d^2$ heads that is then guaranteed to compete against the best MHLA parameters---establishing our agnostic learning result (the learned model competes against the best choice of parameters in the hypothesis class). 

Next, achieving zero training and validation loss does not by itself certify that a model has learned a target computation well enough to generalize out of distribution.  Imagine learning arithmetic from input output pairs alone.  Many distinct parameter settings can fit the same data, and fail for larger length inputs.  We therefore ask:

\textit{
Is there a data‑dependent, efficiently checkable condition that forces every empirical‑risk minimiser to realise the same function?
}

For MHLAs the answer is \textbf{yes}.  
Define the second‑moment matrix of the cubic feature map~$\mathcal{X}$ as
\[
\Lambda_D \;=\; \mathbb{E}_{(Z,y)\in D}\!\bigl[\mathcal{X}(Z)\,\mathcal{X}(Z)^{\!\top}\bigr].
\]
If $\Lambda_D$ is full rank—our \emph{certifiable identifiability} criterion—then \emph{all} empirical‑risk minimisers of MHLA coincide on every input.  
The test runs in polynomial time and is unaffected by parameter redundancies such as rescaling $V$ and~$Q$.

Combining this certificate with our expressivity result yields a polynomial‑time procedure that  
(i) learns any circuit family implementable by MHLA whenever the training data satisfy the criterion, and  
(ii) provably recovers, for example, a bounded‑history universal Turing machine from its input–output traces (Appendix~\ref{sec:learn-utm}).  
Once learned, the MHLA simulates any such Turing machine on any input within the prescribed size budget.

In the experimental section, we validate our theoretical findings. In \cref{sec:extraheads}, we train multiple models using stochastic gradient descent on a dataset generated by a single linear attention network's output. Our results demonstrate that multi-head linear attention outperforms both single-layer linear attention and multi-layer linear attention, achieving comparable results to our \Algref{alg:poly}.  In \cref{sec:certificatehelps}, we show that our proposed certificate directly correlates with generalization error even for models trained using stochastic gradient descent. 
In summary:
\begin{itemize}
    \item We provide a polynomial time algorithm that, given any dataset, finds the best fit parameters for multi head linear attention and generalizes with polynomial data, i.e., strong agnostic PAC learning (\cref{sec:polylearn}).   
    
    \item We find an efficiently checkable condition (certifiable identifiability) on the training dataset that certifies every empirical risk minimizer of a MHLA is functionally equivalent, and therefore has the same behavior out of distribution (\cref{sec:cert} see \lemref{lem:certifiable-identifiability}).
    
    \item We study empirically the value of overparameterization with multiple heads vs. multiple layers in \cref{sec:extraheads}.  We verify our certificates empirically on the associative memory task in \cref{sec:certificatehelps}.             
\end{itemize}
\begin{algorithm}[H]
\small
\caption{MHLA Learning via Regression}
\label{alg:poly}
\begin{algorithmic}[1]
\STATE{\textbf{Input: } Data $D \defeq \{(Z_i,y_i)\}_{i \in [N]}$ for $Z_i  \in \R^{d \times n_i}$ and $y \in \R^d$}
\STATE{ $\{\mathcal{X}_i\}_{i \in [N]} \defeq \text{ExtractFeature}(D)$, generates 
\begin{equation}\label{eq:features}
\mathcal{X}_i \defeq \begin{bmatrix} \langle z_{1:}, z_{1:} \rangle z_{1 n_i} & \langle z_{1:}, z_{2:} \rangle z_{1 n_i} & \cdots & \langle z_{1:}, z_{d:}\rangle z_{dn_i} \\
\langle z_{2:}, z_{1:} \rangle z_{1 n_i} & \langle z_{2:}, z_{2:} \rangle z_{1 n_i} & \cdots & \langle z_{2:}, z_{d:}\rangle z_{dn_i}\\
\vdots & \vdots & \ddots & \vdots \\
\langle z_{d:}, z_{1:} \rangle z_{1 n_i} & \langle z_{d:}, z_{2:} \rangle z_{1 n_i} & \cdots & \langle z_{d:}, z_{d:}\rangle z_{dn_i}\\
\end{bmatrix} ~ .
\end{equation}}
\STATE{Create dataset $\{X_{i,a}\}_{i \in [N], a \in [d]}$. Let $X_{i,a} \in \R^{d^2 \times d^2}$ be a matrix that is comprised of $\mathcal{X}_{i}$ in the $a'th$ block of $d$ rows and $0$ everywhere else:}
\STATE{ \begin{equation}X_{i,a} = \begin{bmatrix} 0 & \hdots & \mathcal{X}_{i}^T & \hdots & 0\end{bmatrix}^T\end{equation}}
\STATE{Let $\hat{W} \in \R^{d^2 \times d^2}$ be regressor:
\begin{equation} \hat{W} \defeq \argmin_{W \in \R^{d^2 \times d^2}}\sum_{i \in [N]} \sum_{a \in [d]} \left(\langle W, X_{i,a}\rangle - 
y_{i,a}\right)^2\end{equation} where $y_{i,a}$ is the $a$'th coordinate of $y_i$.}
\STATE{Take the SVD of $\hat{W} =  AB^T = \sum_{i \in [\hat{H}]} A_i B_i^T$ where $\hat{H}$ is the rank of $\hat{W}$.}
\STATE{$V_h = \text{Fold}(A_h)$ and $Q_h = \text{Fold}(B_h)$ where $\text{Fold}: \R^{d^2} \rightarrow \R^{d \times d}$ takes a vector $p \defeq [p_{ij} \text{ for } i \in [d] \text{ and } j \in [d]]$ and reshapes into a matrix $P \in \R^{d \times d}$ such that $P_{ij} = p_{ij}$.}
\STATE{\textbf{Return: } $\{V_h,Q_h\}_{h \in [\hat{H}]}$}
\end{algorithmic}
\end{algorithm}

\section{Technical Overview}

We start with basic definitions of a multi-head linear attention (MHLA) module, an attention module without the softmax activation.  

\begin{definition} [Multi-Head Linear Attention]
Let $Z \in \R^{d \times n}$ be a matrix of input data.  Let $\Theta = \{(V_h,Q_h)\}_{h \in [H]}$ be a set of parameters where each $V_h, Q_h \in \R^{d \times d}$ denotes value and key-query matrices for all heads $h \in [H]$.  We say $\Theta \in \Omega_{H}$ where $\Omega_H$ is the space of sets of $H$ ordered tuples of $d \times d$ matrices.  We define \emph{multi-head linear attention (MHLA)} to be the function $\text{MHLA}_{\Theta}: \mathbb{R}^{d \times n} \rightarrow \mathbb{R}^{d \times n}$,  
\begin{equation}\label{eq:LA}
\hat{Y} = \text{MHLA}_{\Theta}(Z) = \sum\nolimits_{h \in [H]}V_hZ(Z^TQ_hZ) ~ , 
\end{equation}
where $\hat{Y} \in \R^{d \times n}$ is the output of the one layer linear attention.
We will primarily be interested in the rightmost column vector output by $\text{MHLA}_{\Theta}$ (e.g., as in auto-regressive language models), which is:
\begin{equation}
\hat{y} = \text{MHLA}_{\Theta}(Z) = \sum\nolimits_{h \in [H]}V_hZ(Z^TQ_hZ[:,n]) ~ , 
\end{equation}
where $Z[:,n]$ is the $n$'th column of $Z$.
\end{definition}

\subsection{Polynomial-time learnability}\label{sec:polylearn} Our main result is that $\text{MHLA}$ is learnable in polynomial time.  Colloquially, \Algref{alg:poly} returns an $\text{MHLA}$ that attains the global minimum of the training loss and requires as few as $\text{poly}(d,\epsilon^{-1},\log(\delta^{-1}))$ samples to achieve $\epsilon$ generalization error with probability $1-\delta$.  Our algorithmic guarantees do not require the data to be ``realizable'' (that is, the data need not be generated by an underlying $\text{MHLA}$). 

\begin{restatable}[Learnability of Linear Attention]{theorem}{mainTheorem}
\label{thm:learning}
Let $D$ be a dataset $D = \{Z_i, y_i\}_{i \in [N]}$ drawn i.i.d.\ from a distribution $\mathcal{D}$ where each $Z_i \in \R^{d \times n_i}$, $y_i \in \R^d$. The embedding dimension $d$ is fixed across the dataset, whereas $n_i$ can be different for each datapoint.  Let $n_{max} = \max_{i \in [N]} n_i$ be the maximum sequence length, and let $\Omega_H$ be the space of $H$ pairs of value and key-query matrices $\{(V_h,Q_h)\}_{h \in [H]}$ for any $H \in [1,\infty)$.  Then there is an algorithm (\Algref{alg:poly}) that runs in time $O(Nd^4 n_{max} \epsilon^{-1})$ and that, given input--output pairs $\{(Z_i,y_i)\}_{i \in [N]}$, returns $\hat{\Theta} = \{(\hat{V}_h,\hat{Q}_h)\}_{h \in [\hat{H}]} \in \Omega_{\hat{H}}$ for $\hat{H} \leq d^2$ such that with probability $1 - \delta$,
\begin{multline}
\E_{(Z,y) \in \mathcal{D}}\left[\|\text{MHLA}_{\hat{\Theta}}(Z) - y\|^2\right]\\ - \min\nolimits_{\Theta \in \Omega_H}\E_{(Z,y) \in \mathcal{D}}\left[\|\text{MHLA}_{\Theta}(Z) - y\|^2\right]\leq \epsilon  
\end{multline}
with sample complexity $N = O\left(\frac{1}{\epsilon}\left(d^4 + \log(\delta^{-1})\right)\right)$.
\end{restatable}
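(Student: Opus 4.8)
The plan is to reduce the learning problem for MHLAs to ordinary (finite-dimensional) linear regression, using the reformulation already sketched in the technical overview: the MHLA output $\hat{y} = \sum_{h}V_h Z (Z^T Q_h Z[:,n])$ is linear in the matrix $W = \sum_h \mathrm{flatten}(V_h)\mathrm{flatten}(Q_h)^T \in \R^{d^2 \times d^2}$ when paired against a fixed cubic feature map $\mathcal{X}(Z)$, i.e.\ $\text{MHLA}_{\Theta}(Z) = \langle W, \mathcal{X}(Z)\rangle$ for an appropriate tensorization. First I would make this identity precise: define $\mathcal{X}(Z)$ explicitly, verify the bilinear identity $\langle \mathrm{flatten}(V)\mathrm{flatten}(Q)^T, \mathcal{X}(Z)\rangle = VZ(Z^TQZ[:,n])$ for a single head, and sum over heads to get that $\{\text{MHLA}_\Theta : \Theta \in \Omega_H\}$ is exactly the set of maps $Z \mapsto \langle W, \mathcal{X}(Z)\rangle$ with $\mathrm{rank}(W) \le H$. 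Crucially, note the empirical/population risk is convex in $W$ and that relaxing the rank constraint only enlarges the feasible set, so the unconstrained least-squares minimizer $\hat{W}$ over all of $\R^{d^2\times d^2}$ has risk no larger than $\min_{\Theta\in\Omega_H}$ of the MHLA risk, for every $H$; this is what gives the agnostic guarantee against $\Omega_H$ uniformly in $H$.

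Second, I would run ordinary least squares on the lifted data $\{(\mathcal{X}(Z_i), y_i)\}$ to obtain $\hat{W}$, then take its SVD $\hat{W} = \sum_{h=1}^{\hat H} \sigma_h u_h v_h^T$ with $\hat H \le d^2$, and unflatten $\sqrt{\sigma_h}u_h \mapsto \hat V_h$, $\sqrt{\sigma_h}v_h \mapsto \hat Q_h$; by the bilinear identity this $\hat\Theta$ realizes exactly $Z\mapsto\langle \hat W, \mathcal{X}(Z)\rangle$, so $\text{MHLA}_{\hat\Theta}$ has the same (optimal over $W$) empirical risk. The runtime bound $O(Nd^4 n_{max}\epsilon^{-1})$ should come from: computing each $\mathcal{X}(Z_i)$ (a $d^2\times d^2$ object built from $Z_i$, costing $O(d^4 n_i)$ or so), forming the normal equations / running an iterative solver to accuracy $\epsilon$, and the $O(d^6)$ SVD absorbed appropriately — I'd present this as a straightforward accounting, possibly using a gradient-descent solver to get the linear-in-$\epsilon^{-1}$ dependence rather than a direct solve.

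Third, for the generalization bound I would invoke a standard uniform-convergence / excess-risk argument for linear regression: the population excess risk of the ERM $\hat W$ over the (convex, finite-dimensional) class decomposes into estimation error, controlled by the $d^4$ effective dimension of the $\mathcal{X}$-feature space, plus a $\log(\delta^{-1})$ confidence term, giving $N = O(\epsilon^{-1}(d^4 + \log\delta^{-1}))$. The relevant tools are either a direct covariance-concentration bound (Matrix Bernstein on $\Lambda_D = \E[\mathcal{X}(Z)\mathcal{X}(Z)^T]$ and the cross term $\E[\mathcal{X}(Z)y]$) combined with a well-conditioning or regularization step, or a Rademacher-complexity bound for bounded linear predictors; I would state whatever boundedness assumptions on $\|Z\|$, $\|y\|$ are implicitly needed (the theorem as stated suppresses these constants, so I would fold them into the $O(\cdot)$). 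Composing the three pieces — $\text{MHLA}_{\hat\Theta}$ matches the unconstrained-$W$ ERM, the unconstrained ERM beats $\min_{\Theta\in\Omega_H}$ on the empirical risk, and empirical risk is $\epsilon/2$-close to population risk uniformly — yields the claimed inequality.

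The main obstacle I anticipate is not any single step but making the feature map $\mathcal{X}(Z)$ and its flattening conventions fully rigorous so that the rank-$H$-to-$H$-heads correspondence is tight in both directions (including that the SVD decomposition genuinely produces valid $(\hat V_h, \hat Q_h)$ pairs with no more than $d^2$ heads and exactly the same function), and controlling the conditioning of the lifted regression problem so the estimation-error bound is clean — since $\mathcal{X}(Z)\mathcal{X}(Z)^T$ is a structured (low-rank-per-sample, cubic) random matrix, verifying the concentration hypotheses and handling the possibly-degenerate population covariance (where a small ridge or a pseudoinverse is needed, and must be shown not to spoil the agnostic comparison) is the delicate part. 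Everything else is bookkeeping on top of textbook linear-regression analysis.
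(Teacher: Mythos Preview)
Your proposal is correct and follows essentially the same route as the paper: linearize $\text{MHLA}_\Theta$ as $\langle W,\mathcal{X}(Z)\rangle$ with $W=\sum_h\mathrm{flatten}(V_h)\mathrm{flatten}(Q_h)^T$, relax the rank-$H$ constraint to solve unconstrained least squares over $W\in\R^{d^2\times d^2}$, then SVD $W$ back into at most $d^2$ heads. The paper's proof is in fact terser than your outline---it dispatches generalization with a one-line appeal to ``classical results in VC theory'' and does not discuss conditioning or boundedness at all---so your concerns about the degenerate-covariance case and implicit norm assumptions are valid caveats that the paper simply elides rather than handles differently.
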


Below we describe the high-level ideas behind the algorithm; a formal proof is given in \cref{main-proof}.
Note that if we are purely concerned with guaranteeing that we can find a global minimum of the training loss, we may remove the i.i.d.\ assumption: \Algref{alg:poly} is always within error $\epsilon$ of the optimal training loss.  This is also detailed in \cref{main-proof}.
Specific issues related to generalization over autoregressive sequences rather than i.i.d.\ data 
are handled in the UTM learning result with a standard union bound on the sample complexity; see \Secref{sec:learn-UTM}.      

The main idea behind \Algref{alg:poly} is to construct a feature mapping $\mathcal{X}: \R^{d \times n} \rightarrow \R^{d \times d^2}$ from the data covariates $Z$ with entries $z_{ij}$ for the entry in the $i$'th row and $j$'th column and rows $z_{1:}, z_{2:}, ..., z_{d:} \in \R^{n}$ to a feature space of dimension $d \times d^2$.  The map $\mathcal{X}(Z)$ is defined as:  
\begin{multline}\label{eq:features}
\mathcal{X}(Z) \defeq \\ \begin{bmatrix} \langle z_{1:}, z_{1:} \rangle z_{1 n} & \langle z_{1:}, z_{2:} \rangle z_{1 n} & \cdots  & \langle z_{1:}, z_{d:}\rangle z_{dn} \\
\langle z_{2:}, z_{1:} \rangle z_{1 n} & \langle z_{2:}, z_{2:} \rangle z_{1 n} & \cdots & \langle z_{2:}, z_{d:}\rangle z_{dn}\\
\vdots & \vdots & \ddots & \vdots  \\
\langle z_{d:}, z_{1:} \rangle z_{1 n} & \langle z_{d:}, z_{2:} \rangle z_{1 n} & \cdots &  \langle z_{d:}, z_{d:}\rangle z_{dn}\\
\end{bmatrix} ~ .
\end{multline}
Here, we index the rows of $\mathcal{X}(Z)$ by $j \in [d]$ and the columns by all tuples $(k,\ell) \in [d]^2$ such that $\mathcal{X}(Z)_{j,(k,\ell)} = \langle z_{j:}, z_{k:}\rangle z_{\ell n}$.  At a high level, \Algref{alg:poly} is a kernel method defined by the feature mapping $\mathcal{X}$.  The learned kernel predictor (a regressor) can be mapped back onto a set of parameters $\{\hat{V}_h, \hat{Q}_h\}_{h \in \hat{H}}$ for an $\text{MHLA}$ with no more than $d^2$ heads via SVD. Hence, the relaxation translates into more heads. Interestingly, in our experiments in Section~\ref{sec:extraheads}, $d^2$ heads also benefit learning with SGD.

\paragraph{Proof Idea: }  Much of the notation in this section is defined in \Algref{alg:poly}.  First we write down the loss, and observe that a one-layer attention network is a quadratic polynomial in $\{V_h,Q_h\}_{h \in [H]}$ with input features $X_{i,a}$:
\begin{equation} \label{eq:fundamental}
\mathcal{L}_{\Theta}(\{(Z_i, y_i)\}_{i \in [N]}) = \frac{1}{N}\sum_{i \in [N]}\sum_{a \in [d]} (\left\langle \mathcal{T}_{\Theta}, X_{i,a}\right\rangle - y_{i,a})^2
\end{equation}
with
\begin{multline}\nonumber
\mathcal{T}_{\Theta} \defeq \sum_{h \in [H]}\text{flatten}(V_h) \text{flatten}(Q_h)^T\\ = \sum_{h \in [H]}\begin{bmatrix} V_{h,00} Q_{h,00} & V_{h,00} Q_{h,01} & \hdots & V_{h,00} Q_{h,dd} \\
V_{h,01} Q_{h,00} & V_{h,01} Q_{h,01} & \hdots & V_{h,01} Q_{h,dd}\\
\vdots & \vdots & \vdots \\
V_{h,dd} Q_{h,00} & V_{h,dd} Q_{h,01} & \hdots & V_{h,dd} Q_{h,dd} 
\end{bmatrix} 
\end{multline}

Now we relax this objective by replacing $\mathcal{T}_{\Theta}$ with an unconstrained matrix $W \in \R^{d^2 \times d^2}$.  
While $\mathcal{T}_{\Theta}$ is a rank-$H$ matrix, we allow $W$ to be a general matrix, so this relaxation is guaranteed to have a smaller loss. Furthermore, the loss can be optimized via ordinary least squares. Finally, if we apply SVD to $W$ we obtain a set of $d^2$ left and right singular vectors scaled by the square root the magnitude of the singular value.  Here the scaled left singular vectors correspond to $\hat{V}_h$ and the scaled right singular vectors correspond to $\hat{Q}_h$ for $h \in [\hat{H}]$.  Since the rank of $W$ is no greater than $d^2$ the resulting \text{MHLA} satisfies $\hat{H} \leq d^2$.  The sample complexity follows from classical results in VC theory \citep{kearns94}.  For a full proof see \cref{main-proof}.

\subsection{Identifiability} 

A direct implication of our algorithmic result is the construction of an efficiently checkable condition on the data that guarantees every empirical risk minimizer in a family of $\text{MHLA}$s computes the same function.  Let $\Lambda_D$ be the second moment of a specific mapping $\mathcal{H}(Z)$  of the data, defined in \lemref{lem:certifiable-identifiability}.  
\begin{equation}
    \Lambda_D = \E[\mathcal{H}(Z) \,\mathcal{H}(Z)^T ] = \frac{1}{N}  \sum_{Z \in D}[\mathcal{H}(Z) \, \mathcal{H}(Z)^T].
\end{equation}   
Then if $\Lambda_D$ is full rank or equivalently its minimum eigenvalue is greater than zero, then it is guaranteed that $\text{MHLA}$ is \textit{identifiable with respect to the data}.  
\begin{lemma}[Certificate of Identifiability---Informal]
\label{lem:cert-simple}
Let dataset $D = \{(Z_i, y_i)\}_{i \in [N]}$ be realizable (see \defref{def:realizability}) by an $H$-head \text{MHLA} for any $H \geq 1$.  Let $\mathcal{H}$ be the uniform family of polynomials $\mathcal{H}_{n}: \R^{d \times n} \rightarrow \R^{\psi}$ for $\psi \defeq {d \choose 2}d + d^2$ defined as in \Algref{alg:features-unique}.  For convenience we drop the subscript of $n$ and write $\mathcal{H}(Z)$ to mean $\mathcal{H}_n(Z)$ for $Z \in \R^{d \times n}$.
Finally, define $\Lambda_D \in \R^{\psi \times \psi}$ to be the second moment of the data features:
\begin{equation}
\Lambda_D \defeq \E_{D}\left[\mathcal{H}(Z)\mathcal{H}(Z)^T \right] ~ .
\end{equation}
Then if the eigenvalue $\lambda_{\min}\left( \Lambda_D \right) > 0$, we say that $\text{MHLA}_{\Theta}$ is certifiably identifiable with respect to $D$.  That is, for every pair of empirical risk minimizers $\Theta,\Theta' \in \Omega_H$ 
\begin{equation}
\text{MHLA}_{\Theta} = \text{MHLA}_{\Theta'} 
\end{equation}
i.e., the two models have the same outputs on all inputs. 
\end{lemma}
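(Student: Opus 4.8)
The plan is to reduce the functional-equality claim to a \emph{uniqueness} statement for an unconstrained linear least-squares problem in the deduplicated feature space $\mathcal{H}$, and then to invoke the hypothesis $\lambda_{\min}(\Lambda_D) > 0$. Throughout I work with the rightmost-column form $\text{MHLA}_\Theta(Z) \in \R^d$, which is what the dataset $D$ and the feature maps are built around.

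The first step is a linear-in-features representation that is \emph{uniform in the sequence length}. Expanding $\text{MHLA}_\Theta(Z) = \sum_{h}V_h Z(Z^T Q_h Z[:,n])$ coordinatewise shows that each output coordinate is a linear combination of the cubic monomials $\langle z_{a:}, z_{b:}\rangle z_{cn}$ with coefficients read off entrywise from $W_\Theta \defeq \sum_h \text{flatten}(V_h)\,\text{flatten}(Q_h)^T$. Since $\langle z_{a:},z_{b:}\rangle$ is symmetric under $a\leftrightarrow b$, these monomials are indexed by an unordered pair $\{a,b\}$ together with $c\in[d]$ --- exactly $\psi = \binom{d}{2} d + d^2$ of them. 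Collecting them into $\mathcal{H}_n(Z) \in \R^\psi$ (as in \Algref{alg:features-unique}) and folding the $\{a,b\}$ and $\{b,a\}$ coefficients of $W_\Theta$ together yields a single matrix $M_\Theta \in \R^{d\times\psi}$, \emph{not depending on $n$}, such that $\text{MHLA}_\Theta(Z) = M_\Theta\,\mathcal{H}_n(Z)$ for every length $n$ and every $\Theta \in \Omega_H$. The deduplication is essential: with the redundant $d\times d^2$ feature map the analogous second moment would be automatically singular, so the certificate has to be phrased on $\R^{\psi\times\psi}$.

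Next I would use realizability to pin down every empirical risk minimizer on the training set. By \defref{def:realizability} there is a $\Theta^\star \in \Omega_H$ with $M_{\Theta^\star}\mathcal{H}(Z_i) = y_i$ for all $i \in [N]$, so the minimal empirical risk over $\Omega_H$ is $0$, and hence every empirical risk minimizer $\Theta$ also has risk $0$, i.e.\ $M_\Theta\mathcal{H}(Z_i) = y_i$ for all $i$. (Realizability is genuinely used here: the set $\{M_\Theta : \Theta\in\Omega_H\}$ is rank-constrained and nonconvex, but once the global least-squares optimum over all of $\R^{d\times\psi}$ equals $0$ and is attained inside the class, every in-class minimizer must also be a global least-squares optimum.) Now, given two empirical risk minimizers $\Theta,\Theta'$, set $A \defeq M_\Theta - M_{\Theta'} \in \R^{d\times\psi}$; then $A\mathcal{H}(Z_i) = 0$ for every $i$, so
\[
A\,\Lambda_D\,A^T \;=\; \frac{1}{N}\sum_{i\in[N]}\bigl(A\mathcal{H}(Z_i)\bigr)\bigl(A\mathcal{H}(Z_i)\bigr)^T \;=\; 0 .
\]
Since $\lambda_{\min}(\Lambda_D) > 0$, the matrix $\Lambda_D$ admits an invertible square root, so $0 = \Tr(A\Lambda_D A^T) = \|\Lambda_D^{1/2}A^T\|_F^2$ forces $A = 0$, i.e.\ $M_\Theta = M_{\Theta'}$. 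Because the representation from the first step holds for \emph{every} $\Theta$ and \emph{every} input $Z$ of every length, $M_\Theta = M_{\Theta'}$ immediately gives $\text{MHLA}_\Theta(Z) = \text{MHLA}_{\Theta'}(Z)$ for all $Z$, which is exactly the conclusion.

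I expect the only real obstacle to be the first step: carrying out the index bookkeeping in the cubic expansion correctly and, in particular, performing the symmetrization so that (a) a single $n$-independent matrix $M_\Theta$ represents $\text{MHLA}$ on inputs of all lengths, and (b) $\Lambda_D$ as defined in the lemma is precisely the Gram matrix whose nonsingularity is invoked in the third step. The realizability bookkeeping and the positive-definiteness argument are then routine.
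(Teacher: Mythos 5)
Your proposal is correct and follows essentially the same route as the paper's proof: you linearize $\text{MHLA}_\Theta$ over the deduplicated $\psi$-dimensional feature map (your $M_\Theta$ is exactly the paper's polynomial $p(\Theta)$ with rows $p_a(\Theta)_{\{j,k\}\ell} = \sum_h (V_{h,aj}Q_{h,k\ell}+V_{h,ak}Q_{h,j\ell})$), use realizability to conclude every empirical risk minimizer interpolates the data, and use $\lambda_{\min}(\Lambda_D)>0$ to force the linear coefficients of any two minimizers to coincide. The only cosmetic difference is that you make the uniqueness step explicit via $A\Lambda_D A^T=0 \Rightarrow A=0$, where the paper appeals to uniqueness of the ordinary least-squares solution under a positive-definite second moment.
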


\begin{corollary}\label{cor:identifiability}
There is a polynomial $p: \Omega_H \rightarrow \R^{\psi}$ such that for any pair of parameters $\Theta,\Theta' \in \Omega_H$ we have $\text{MHLA}_{\Theta} = \text{MHLA}_{\Theta'}$ if and only if  $p(\Theta) = p(\Theta')$.  
\end{corollary}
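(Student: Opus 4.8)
The plan is to exhibit $p$ explicitly as the \emph{coefficient map} of $\text{MHLA}$ in the deduplicated feature space $\mathcal H$, and then verify the two implications. First I would write $\text{MHLA}_\Theta$ in coordinates: expanding the rightmost column gives $\text{MHLA}_\Theta(Z)_a=\sum_{h\in[H]}\big(V_h(ZZ^T)Q_hZ[:,n]\big)_a=\sum_{b,c,e\in[d]}W(\Theta)_{(a,b),(c,e)}\,(ZZ^T)_{bc}\,z_{en}$, where $W(\Theta)=\sum_{h\in[H]}\text{flatten}(V_h)\text{flatten}(Q_h)^{T}\in\R^{d^2\times d^2}$ is the matrix from \Secref{sec:polylearn}. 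Since $ZZ^T$ is symmetric, only the $b\leftrightarrow c$ symmetrization of $W(\Theta)$ affects the value, and regrouping terms by the unordered pair $\{b,c\}$ yields $\text{MHLA}_\Theta(Z)=p(\Theta)\,\mathcal H(Z)$, where $\mathcal H(Z)\in\R^\psi$ is the deduplicated cubic feature vector of \lemref{lem:cert-simple} (its entries are the $\psi$ distinct monomials among $\{(ZZ^T)_{bc}z_{en}\}=\{\langle z_{b:},z_{c:}\rangle z_{en}\}$, indexed by $b\le c$ and $e\in[d]$) and $p(\Theta)_{a,(\{b,c\},e)}=W(\Theta)_{(a,b),(c,e)}+W(\Theta)_{(a,c),(b,e)}$ for $b<c$ (and $W(\Theta)_{(a,b),(b,e)}$ for $b=c$). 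Each entry of $p(\Theta)$ equals $\sum_{h\in[H]}\big((V_h)_{ab}(Q_h)_{ce}+(V_h)_{ac}(Q_h)_{be}\big)$, a degree-$2$ polynomial in the entries of $\Theta$ given by a formula uniform in $H$; so $p$ is polynomial. (Strictly $p$ is valued in $\R^{d\times\psi}$, i.e.\ one $\R^\psi$ block per output coordinate; I will treat this identification tacitly.)

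The ``if'' direction is then immediate: $\text{MHLA}$ factors through $p$, so $p(\Theta)=p(\Theta')$ gives $\text{MHLA}_\Theta(Z)=p(\Theta)\mathcal H(Z)=p(\Theta')\mathcal H(Z)=\text{MHLA}_{\Theta'}(Z)$ for every input $Z$ of every length $n$, hence $\text{MHLA}_\Theta=\text{MHLA}_{\Theta'}$.

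For the ``only if'' direction, suppose $\text{MHLA}_\Theta=\text{MHLA}_{\Theta'}$. Then $(p(\Theta)-p(\Theta'))\,\mathcal H(Z)=0$ for all $Z$, and to conclude $p(\Theta)=p(\Theta')$ it suffices that $\{\mathcal H(Z):Z\in\R^{d\times n},\,n\ge 1\}$ spans $\R^\psi$. This is the one substantive point, and it is exactly the faithfulness of the $\mathcal H$-features that underpins \lemref{lem:cert-simple} (there, $\Lambda_D\succ 0$ is attainable precisely because these features are not confined to a proper subspace of $\R^\psi$), so I would either invoke that fact or argue directly: the $\psi$ polynomials $\langle z_{b:},z_{c:}\rangle z_{en}=\sum_{m=1}^{n}Z_{bm}Z_{cm}Z_{en}$ ($b\le c$, $e\in[d]$) are linearly independent as polynomials in the entries of $Z$. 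For a self-contained proof, fix any $n\ge 2$ and, for each triple, read off a private monomial---$Z_{b1}Z_{c1}Z_{en}$ when $b,c,e$ are distinct, and its evident variants when indices coincide (e.g.\ $Z_{b1}^2Z_{en}$, $Z_{b1}Z_{c1}Z_{bn}$, $Z_{b1}^2Z_{bn}$)---which appears with coefficient $1$ in the feature indexed by $(\{b,c\},e)$ and coefficient $0$ in every other reduced feature, since the unique column-$n$ variable pins $e$ while the two column-$1$ variables pin $\{b,c\}$ and $m=1$. Any vanishing linear combination of the features therefore has all coefficients zero, which gives linear independence and hence the spanning claim.

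The only genuine obstacle is this last linear-independence/spanning step for $\mathcal H$; everything else is bookkeeping, and since this step is precisely what \lemref{lem:cert-simple} already rests on, the corollary amounts to naming the polynomial complete invariant---namely $p(\Theta)$, the vector of $\mathcal H$-feature coefficients of $\text{MHLA}_\Theta$. I would close by noting that $p$ also turns the identifiability certificate into an explicit decision procedure for functional equivalence, $\text{MHLA}_\Theta=\text{MHLA}_{\Theta'}\iff p(\Theta)=p(\Theta')$, evaluable by a single quadratic map.
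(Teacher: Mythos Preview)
Your proposal is correct and uses essentially the same construction as the paper: both define $p$ as the coefficient vector of $\text{MHLA}_\Theta$ in the deduplicated feature basis $\mathcal{H}$, obtain the factorization $\text{MHLA}_\Theta(Z)_a=\langle p_a(\Theta),\mathcal{H}(Z)\rangle$, and your explicit formula $p_a(\Theta)_{\{j,k\}\ell}=\sum_{h}(V_{h,aj}Q_{h,k\ell}+V_{h,ak}Q_{h,j\ell})$ coincides with the paper's \cref{cor:identifiability-formal}.

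The one difference worth noting is your treatment of the ``only if'' direction. The paper derives the corollary as a byproduct of the certificate machinery: once the factorization is in hand, the existence of data with $\lambda_{\min}(\Lambda_D)>0$ (e.g.\ independent noise, \lemref{lem:independent}) implies the features span $\R^\psi$, which forces $p(\Theta)=p(\Theta')$ whenever the two models agree everywhere. You instead give a direct, self-contained linear-independence argument by exhibiting, for each reduced feature $(\{b,c\},e)$ and any $n\ge 2$, a private monomial $Z_{b1}Z_{c1}Z_{en}$ that appears in that feature and no other. This is cleaner and avoids the detour through the second-moment certificate; the paper's route has the advantage of reusing a lemma it needs anyway. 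Both arrive at the same conclusion.
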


The polynomial $p$ defines the equivalence class of parameters that compute the same function. For a formal statement of \cref{lem:cert-simple} see \lemref{lem:certifiable-identifiability}.  For handling of errors for approximate empirical risk minimization see \lemref{lem:error-identifiability}.  Moreover, the certificate given by \cref{alg:features-unique} is not the only choice of feature mapping $\mathcal{H}$ that would certify identifiability; \lemref{lem:general-certificate} gives a general certificate for identifiability. 
One way to interpret \cref{cor:identifiability} is that
two $\text{MHLA}$ models parameterized by $\Theta$ and $\Theta'$ compute the same function if and only if they are the same linear function in a specific feature space (akin to matching coefficients in polynomial regression), which in turn is true if 
%
%
$p(\Theta) = p(\Theta')$ for the polynomial $p$ given in \cref{cor:identifiability-formal}. Comparing distance between the coefficients in the range of $p$ is essentially the only meaningful metric of distance that is agnostic to the choice of dataset.  

Finally, we answer a few natural questions related to identifiability which we briefly summarize here.  Firstly, perfectly noisy input data is identifiable under weak assumptions on the moments of the noise (see \lemref{lem:independent}). Secondly, the model class of \text{MHLA} with at least $d^2$ heads is certifiably identifiable from the second moment condition alone, and does not require realizability of the data (see \lemref{lem:arbitrary-heads}).
Finally, we empirically verify the min eigenvalue of $\Lambda_D$ predicts the generalization behavior of SGD for MHLA for the problem of learning key--value memories (see \Figref{fig:assoc-memory-certificate}). 

\section{Application to learning Universal Turing Machines.} \label{sec:utm} In Appendix \ref{sec:realizableutm}, we demonstrate that MHLAs can (autoregressively) express universal Turing machines with polynomially bounded computation histories. In this context, our identifiability results imply that, given a certifiably identifiable dataset of Turing machines and their computation histories on input words, empirical risk minimization and in particular \Algref{alg:poly} will learn the universal Turing machine in a strong sense (\lemref{lem:learning-utm} for learning, \lemref{lem:learnUTMcert} with identifiability). That means at test time the learned MHLA will simulate any Turing Machine on any input word up to a given size for a bounded number of steps. For more detail see \ref{sec:learn-utm}

\begin{restatable}[Learning UTM from Certifiably Identifiable Data]{lemma}{learnUTMcert}
\label{lem:learnUTMcert}  Let $D = \{(Z_i,y_i)\}_{i \in [N]}$ be a dataset satisfying $y_i = \text{MHLA}_{\Theta}$ for $\Theta \in \Omega_H$ being the expressibility parameters of \lemref{lem:utm-expressibility} for the set of TM's/words $(M,x) \in \Delta(\hat{\mathcal{Q}}, \hat{\Sigma}, \hat{n}, \hat{\Phi})$.  If $D$ is certifiably identifiable with $\lambda_{min}(\Lambda_D) > \eta$, then there is a $\text{poly}(d,N,\hat{Q},\hat{\Sigma},\hat{n},\hat{\Phi},\eta^{-1})$ time algorithm that outputs a set of parameters $\hat{\Theta} \in \Omega_{d^2}$ such that for all TM's $M$ and input words $x$ in $\Delta(\hat{\mathcal{Q}}, \hat{\Sigma}, \hat{n}, \hat{\Phi})$, we have 
 \begin{equation}
\text{CH}_{\hat{\Theta}}(M,x)^{c(t)}[:-k_t] = x^t ~ .
 \end{equation}
 The $c(t)$ step of the autoregressive computation history of $\hat{\Theta}$ is equal to the $t$'th step of the computation history of $M$ on $x$.  
\end{restatable}
\section{Experiments}
In our experiments, we validate our theoretical predictions in settings where Transformers are trained using stochastic gradient descent (SGD), as follows:  Firstly, \cref{thm:learning} exploits that adding a sufficient number of heads to an MHLA leads to a convex optimization problem after reparameterization.  This suggests that over-parameterizing by adding heads may provide optimization benefits.  We investigate the role of over-parameterization in multi-head and multi-layer linear attention networks.  For random data generated from linear attention networks, we observe that adding more heads achieves faster convergence of training loss than adding more layers.  This suggests that while depth is important for expressiveness, the number of heads is important for optimization (\Figref{fig:val-loss-la}).    

Secondly, we empirically verify the certificate of identifiability provided by \cref{lem:certifiable-identifiability} on datasets for associative memory \citep{bietti2023,cabannes2024} with different choices of embeddings, demonstrating convergence to the equivalence class of the true parameters when $\lambda_{min}(\Lambda_D) > 0$ and converging to spurious solutions when $\lambda_{min}(\Lambda_D) = 0$  (\Figref{fig:assoc-memory-certificate}).

\subsection{Do extra heads help optimization with SGD?}\label{sec:extraheads}

To probe whether more heads facilitate learning in general, we train our convex relaxation and different types of over-parameterized models with SGD on data generated from a single-layer linear attention network. 
For the data, we initialize a single-layer linear attention network with parameters $V \in \mathbb{R}^{1 \times d}$ and $Q \in \mathbb{R}^{d \times d}$, sampled from a Gaussian distribution $\mathcal{N}(0, \frac{I}{\sqrt{d}})$. Input sequences $Z^i \in \mathbb{R}^{T \times d}$ are sampled from $\mathcal{N}(0, \frac{I}{\sqrt{T}})$, where $i=1,\ldots,N$, $T=100$ is the maximum number of time steps, and $N$ is the dataset size. We generate outputs by running the ground-truth network auto-regressively: $y^i_t = VZ^i_{1:t}(Z^i[:,:t]QZ^i[:,t])$, creating our dataset $\mathcal{D} = \{(Z^i, y^i)\}_{i=1}^N$.

In addition to learning with \Algref{alg:poly}, we train three types of models on this data using SGD: (1) multi-head linear attention as in \cref{eq:LA}; (2) multi-layer linear attention with a single head; (3) an ordinary Transformer network \citep{vaswani2017attention} with softmax attention, multi-layer perceptron blocks, and layer normalization.

Figure~\ref{fig:val-loss-lam} illustrates the results. For same experiment with $d=4$ and $N = 2048$ see \cref{fig:val-loss-n-2048-d-4} in the appendix.  Detailed hyperparameters and optimization procedures are described in \cref{app:train-details-la}.

We observe that multi-head attention scales effectively with an increasing number of heads, resulting in improved performance. Notably, for $d=2$ or $4$ input dimensions, using $d^2$ heads 
yields the best performance and is empirically comparable to \Algref{alg:poly}, approaching floating-point error precision. Theoretically, $d^2$ is the maximum rank in the relaxation in  \Algref{alg:poly}.
In contrast, multi-layer attention models show diminishing returns and perform worse than single-layer attention. Interestingly, adding more layers can sometimes degrade performance.
The full transformer model, which incorporates softmax attention, MLP layers and layer normalization, does not significantly outperform the single-layer linear attention model on this task.

These findings suggest that the type of over-parameterization matters significantly in learning linear attention networks. Interestingly, multi-head architectures appear to be particularly effective---aligned with the structure of \Algref{alg:poly}, where the relaxation corresponds to adding more heads.

\begin{figure}[!htp]
    \begin{subfigure}{\textwidth}
        \centering
        \includegraphics[width=0.6\textwidth]{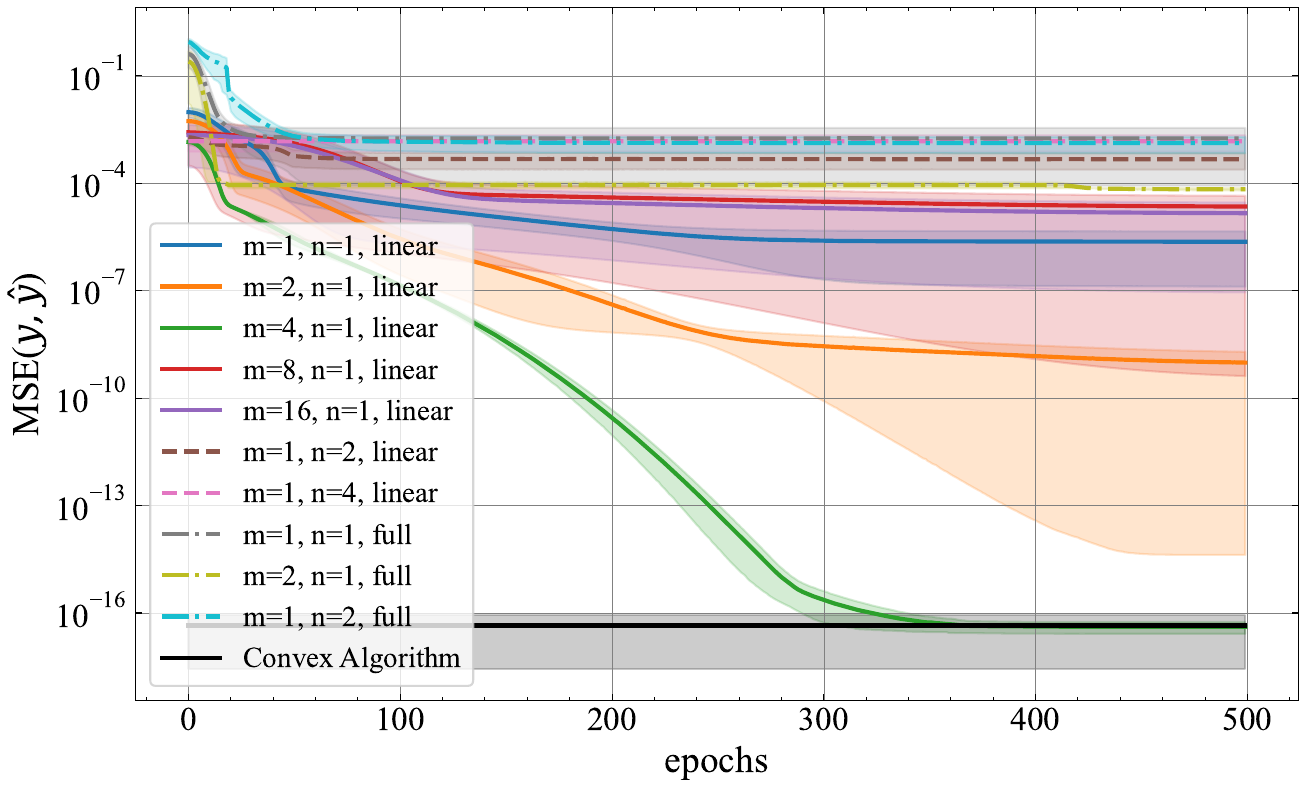}
        \caption{$N=512, d=2$}
        \label{fig:val-loss-n-512-d-2}
    \end{subfigure}

    \begin{subfigure}{\textwidth}
        \centering
        \includegraphics[width=0.6\textwidth]{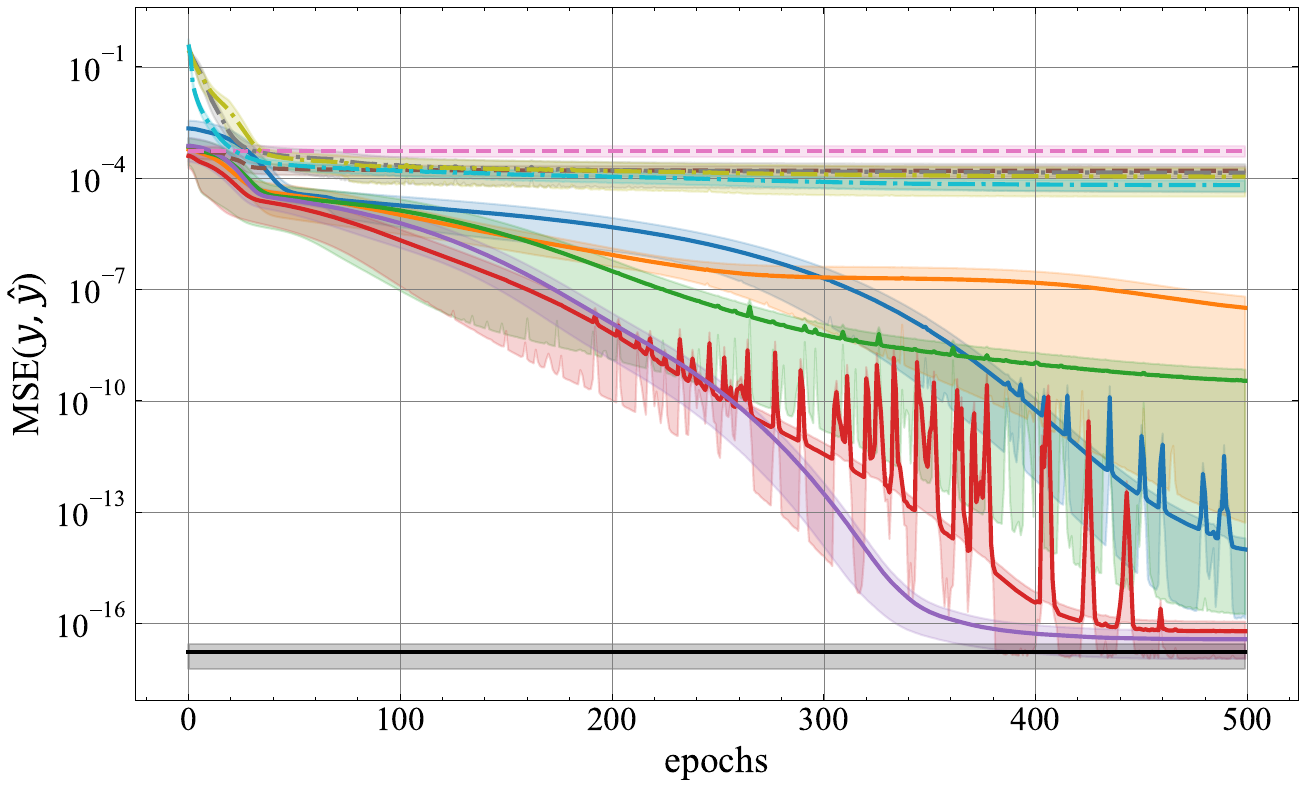}
        \caption{$N=2048, d=4$}
        \label{fig:val-loss-n-2048-d-4}
    \end{subfigure}
    \caption{\textbf{Performance comparison of multi-head, multi-layer linear attention models and the original Transformer model (denoted as \emph{full})}. We trained using SGD on synthetic data generated from a single-layer linear attention model for varying training set sizes ($N$) and input dimensions ($d$), number of heads $m$, and number of layers $n$. 
    Results demonstrate that multi-head architectures converge faster on different input dimensions and match the performance of our \algref{alg:poly} (convex algorithm). Increasing the number of layers or incorporating multilayer perceptrons (MLPs) and layer normalization did not yield consistent improvements. Shading indicates the standard error over three different runs.}
    \label{fig:val-loss-lam}
\end{figure}
%

\subsection{Does certifiable identifiability predict generalization?}\label{sec:certificatehelps}

\begin{figure}[!htp]
    \centering
 \begin{subfigure}[b]{0.4\textwidth}
        \centering
        \includegraphics[width=\textwidth]{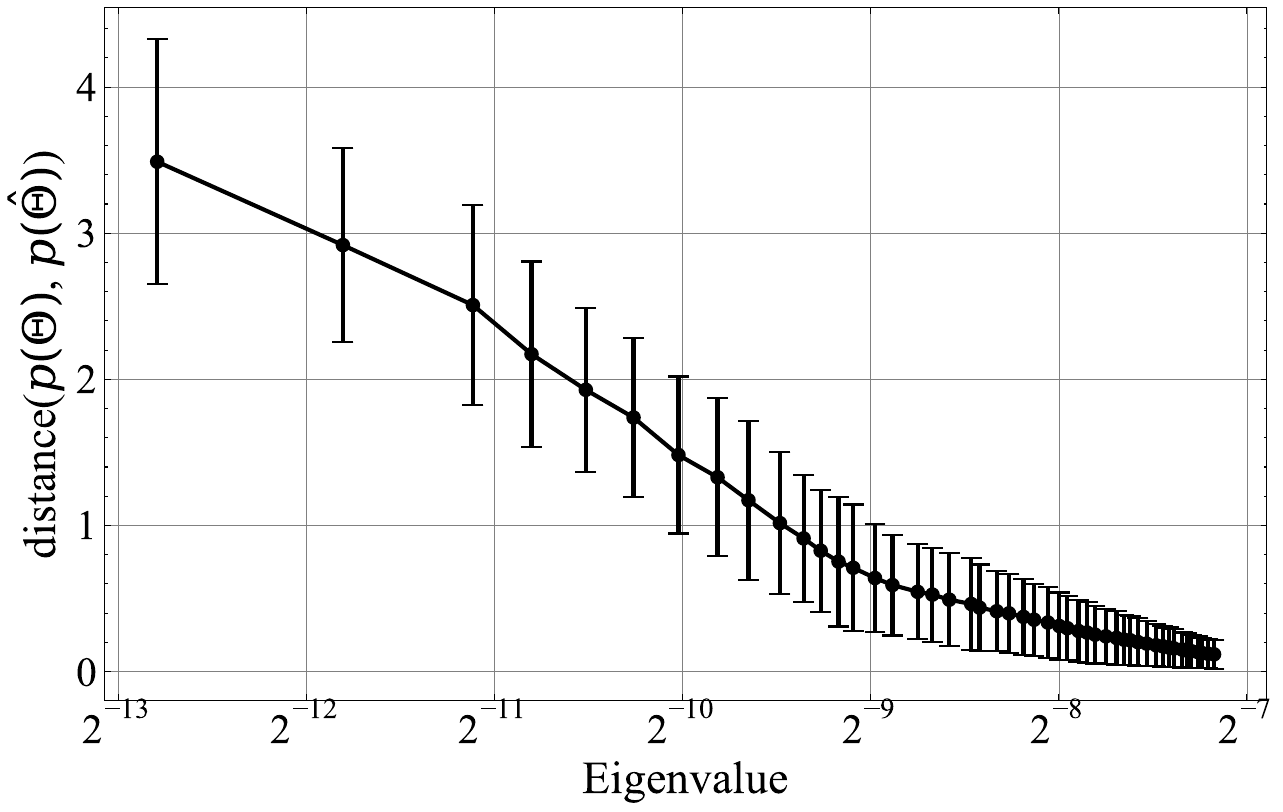}
        \caption{Certificate of identifiability in form of the minimum Eigenvalue $\lambda_{\min}(\Lambda_D)$ vs. Euclidean distance in $p$ feature space of parameters learned by \Algref{alg:poly}.}
        \label{fig:plot1}
        \strut
        \strut
    \end{subfigure}\hspace{15pt}
    \begin{subfigure}[b]{0.5\textwidth}
        \centering
        \includegraphics[width=0.8\textwidth]{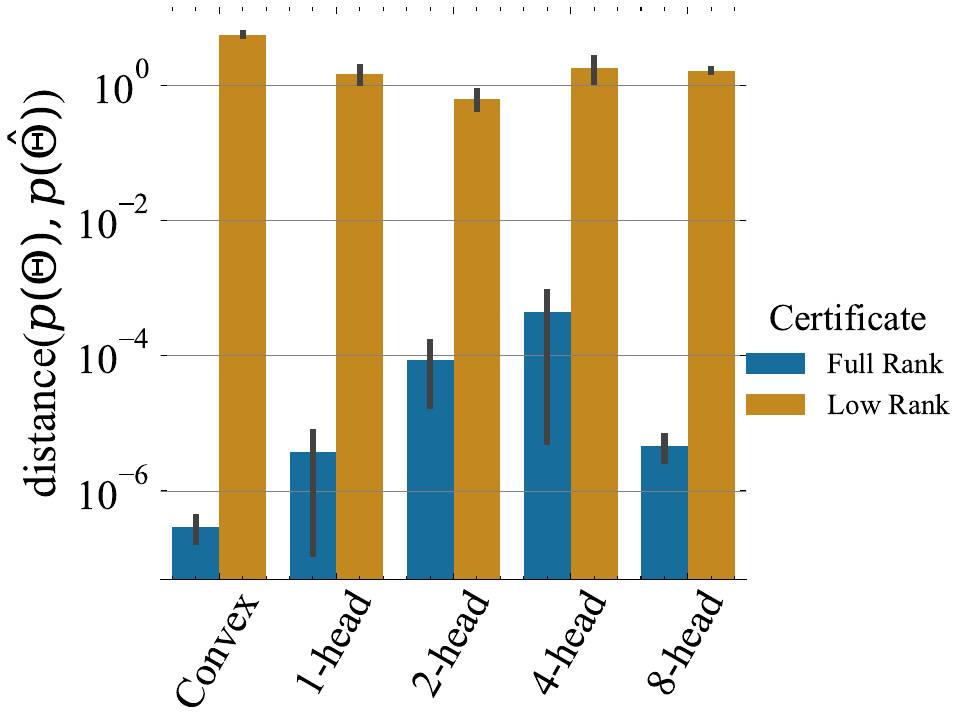}
        \caption{Distance to ground truth parameters in $p$ feature space for certifiably identifiable data (min eigenvalue $= 0.06$) vs.\ nonidentifiable data (min eigenvalue $= 0$).  Here the parameters of MHLA are learned via SGD.  
        Error bars are standard error on three different runs.}
        \label{fig:plot2}
    \end{subfigure}
      \caption{\textbf{Impact of data distribution on the associative lookup task performance:} We generated training data for an associative lookup task \citep{bietti2023,cabannes2024} using mixtures of two distributions: (1) Gaussian key and value vectors, and (2) random unitary key and value vectors. By adjusting the mixture probability, we can manipulate the certificate value (minimum eigenvalue of the data covariance matrix), as unitary key--value vectors give rank-deficient ``certificates''. (a) \Algref{alg:poly}: as the minimum eigenvalue increases, \Algref{alg:poly} converges more closely to the true parameters. (b) SGD: SGD learns parameters that are equivalent to the ground truth parameters in $p$ feature space for certifiably identifiable data, but for unidentifiable data, they are far apart in $p$ feature space and therefore compute different functions.} 
      
    \label{fig:assoc-memory-certificate}
\end{figure}
In \lemref{lem:certifiable-identifiability}, we developed a certificate that provides a sufficient condition for identifiability. To assess the practical relevance of this certificate, we conducted an empirical analysis of convergence in cases where the condition is not satisfied. The results of this analysis are presented in \cref{fig:assoc-memory-certificate}.

\paragraph{Associative Memory} Associative Memory \citep{bietti2023,cabannes2024} is a task of looking up a value in a table with a query.  Via a single head one-layer linear attention model it can be represented with ground truth parameters $\Theta = \{V,Q\}$ where $V,Q \in \R^{2d \times 2d}$:  
\[
\begin{aligned}
V = \begin{bmatrix} 0 & 0\\ 
0 & I_{d \times d}
\end{bmatrix}
\quad
Q = \begin{bmatrix} I_{d \times d} & 0\\ 
0 & 0
\end{bmatrix}.
\end{aligned}
\]
The data $Z$ is drawn as follows: let $k_1,k_2,...,k_d \in \R^{d}$ be random variables corresponding to keys in a lookup table, let $v_1,v_2,...,v_d \in \R^{d}$ be random variables corresponding to values in a lookup table, let $q \in \R^d$ be a random variable corresponding to a query to the lookup table, and $\zeta \sim \mathcal{N}(0,I)$ be random noise, such that $Z$ and the output vector $y$ are defined as:
\begin{gather}
Z = \begin{bmatrix} k_1 & k_2 & \hdots & k_{d} & q \\
v_1 & v_2 & \hdots & v_d & \zeta 
\end{bmatrix}\\
y = \text{MHLA}_{\Theta}(Z) = \begin{bmatrix}
0 \\
\sum_{j \in [d]} \langle q,k_j\rangle v_j
\end{bmatrix}.
\end{gather}
\paragraph{Mixture of distributions:}
We generate two datasets, one that has identifiable $\lambda_{\min}(\Lambda_D) > 0$ and one that is nonidentifiable with $\lambda_{\min}(\Lambda_D) = 0$.  The identifiable dataset is generated with $\{k_j\}_{j \in [d]}$ and $\{v_j\}_{j \in [d]}$ drawn i.i.d $\mathcal{N}(0,I)$. The query $q$ is chosen to be one of the $\{k_j\}_{j \in [d]}$ uniformly at random.  The non-identifiable dataset is drawn such that $\{k_j\}_{j \in [d]}$ forms a random unitary matrix, i.e., $\|k_j\| = 1$ for all $j \in [d]$ and $\langle k_j,k_{j'}\rangle = 0$ for all $j \neq j'$.  Similarly, $\{v_j\}_{j \in [d]}$ is also drawn from a randomly generated unitary matrix.  We draw new random unitary matrices for each datapoint, where  $q$ is again chosen to be one of the $\{k_j\}_{j \in [d]}$ uniformly at random.  We set $d = 4$ dimensions for both datasets, and draw $N = 2^{14}$ samples for each dataset.  We mix the two datasets together with a mixing probability ranging from 95\% unidentifiable to 100\% unidentifiable. In this manner we generate a spread of datasets with different values for $\lambda_{\min}(\Lambda_D)$ that tend to zero.   

\paragraph{Certifiable Identifiability for Algorithm \ref{alg:poly}:}
For each dataset, we run \Algref{alg:poly} which returns $\hat{\Theta}$.  We compare $\hat{\Theta}$ to the ground truth $\Theta$ in $p$ feature space via the 
 distance
\begin{equation}
d(\Theta,\hat{\Theta}) \defeq \| p(\Theta) - p(\hat{\Theta})\|_F.
\end{equation}
Here, $p$ is the polynomial given in \lemref{lem:certifiable-identifiability}.   
Recall from \corref{cor:identifiability-formal} that $p$ defines the equivalence class of parameters that compute the same function, i.e., $\text{MHLA}_{\Theta} = \text{MHLA}_{\hat{\Theta}}$ if and only if $p(\Theta) = p(\hat{\Theta})$.   
On each dataset, we measure the certificate value $\lambda_{\min}(\Lambda_D)$ on the x-axis vs. $d(\Theta,\hat{\Theta})$ on the y-axis.  In \cref{fig:assoc-memory-certificate}a, we see that as the certificate value increases, $d(\Theta,\hat{\Theta})$ decreases, indicating that $\text{MHLA}_{\Theta}$ and $\text{MHLA}_{\hat{\Theta}}$ compute the same function.  

\paragraph{Certifiable Identifiability for MHLA:} Our notion of certifiable identifiability in \lemref{lem:certifiable-identifiability} applies to any empirical risk minimizer.  Therefore, it applies to popular optimizers like SGD and Adam if they achieve the minimum of the loss, which is in our synthetic case equal to zero.  In \Figref{fig:plot2}, we train MHLA models via SGD with $1,2,4,$ and $8$ heads.  For identifiable data with minimum eigenvalue $0.06$, we see that the learned parameters and ground truth parameters are the same in $p$ feature space.  However, for unidentifiable data with minimum eigenvalue $0$, learned parameters and ground truth parameters are far apart in $p$ feature space and therefore compute different functions.




\section{Related Work}
\subsection{Formal Expressivity of Transformers}

A large body of work has been trying to tackle the problem of quantifying what algorithmic tasks can a Transformer do, in terms of various kinds of circuit families~\citep{perez2019turing,edelman2022inductive,hahn2020theoretical,merrill2023parallelism,merrill2022saturated,merrill2021power,liu2022transformers,feng2023towards}. In particular, researchers have studied how Transformers can realize specific DSLs~\citep{weiss2021thinking}, logic expressions~\citep{dong2019neural,barcelo2020logical,barcelo2023logical}, Turing machines~\citep{dehghani2018universal,giannou2023looped,perez2021attention}, formal language recognition~\citep{hao2022formal,chiang2023tighter}, as well as automata and universal Turing machines~\citep{liu2022transformers,li2024chain}. However, while these works primarily focus on determining the types of problems whose solutions a Transformer can express, they often overlook the crucial question of how these solutions can be learned from data. Moreover, there is limited discussion on the sufficiency of the dataset itself---whether the data available can identify the underlying ``true'' function or algorithm that we aim to capture.


\subsection{Learning Transformers}
We break down the literature on learning transformers.  First, there is the literature on statistical learnability, where the focus is on the amount of data required to learn without considering whether there is a tractable algorithm for learning \citep{edelman2022,wcm21,zhang2024,trauger2023}.       

Second, there are learnability results for single head transformers for data distributions under a variety of assumptions. In particular, \citet{zhang2023} provide learnability results for in-context linear regression; \citet{jelassi2022} show that data with spatial structure can be learned; the work of \citet{tian2023} analyzes SGD training dynamics for a toy model for data; and \citet{oymak2023} study the prompt attention model.   

Third, the literature on provable guarantees for learning multi head attention is rather sparse.  \citet{fu2023} give learnability results in a regime where attention matrices are fixed and only the projection matrices are trained. \citet{tarzanagh2024} show connections between single layer attention optimization and SVM learning.  Under a good gradient initialization condition, overparameterization condition, and a condition on the scores of optimal tokens the global convergence of gradient descent to a particular SVM problem can be established.  \citet{deora2023} analyze a setting of learning multi head attention with gradient descent under their Assumption 2.  In the words of the authors "these conditions are related to the realizability condition, which guarantees obtaining small training error near initialization", which they instantiate with the separability of the data in an NTK space and a proximity of initialization to realizable parameters.  Interestingly, they find that multi head attention has benign optimization properties.    Finally, \citet{chen2024} study learning for multi head attention for well structured data that is drawn independent Bernoulli or Gaussian.  They provide an extensive discussion of lower bounds for learning multi head attention.

\section{Conclusion and Limitations}

In this work we tackle the fundamental problem of finding an efficient algorithm that provably learns the weights of a linear Transformer.  Our key theoretical ingredient is to consider a model class that's sufficiently "wide" (scaling number of heads), and to find that the loss is convex under this scaling, with generalization guaranteed by the classical VC theory.  This reinforces the empirical observation that scaling model size enables efficient optimization and can still result in successful generalization.  Our theory extends trivially when arbitrary feature maps $\phi(\cdot)$ are applied to keys and queries providing a natural avenue for extending our theory to models that can approximate softmax transformers with custom key-query kernels.  Of course the model class we consider is far simpler than modern LLM's, but we consider our work an important step towards designing algorithms with provable guarantees for training neural sequence models.          
\subsection*{Acknowledgments}
We gratefully acknowledge support from NSF grants IIS-2214177, IIS-2238240, CCF-2112665 and DMS-2134108; from AFOSR grant FA9550-22-1-0249; from ONR MURI grant N00014-22-1-2740; and from ARO grant W911NF-23-1-0034; from the OpenPhilanthropy Foundation; from MIT Quest for Intelligence; from the MIT-IBM Watson AI Lab; from ONR Science of AI; from Simons Center for the Social Brain; and from an Alexander von Humboldt professorship. Any opinions, findings and conclusions or recommendations expressed in this material are those of the authors and do not necessarily reflect the views of our sponsors.

\bibliographystyle{plainnat}
\bibliography{bibliography}

\appendix
\section{Certificate for identifiability of linear attention}\label{sec:cert}
 
We begin by defining identifiability of a model class with respect to a dataset. 

\begin{definition}[Identifiability] \label{def:identifiability}  Let $D = \{(Z_i,y_i)\}_{i \in [N]}$.  Let $\mathcal{U}_{\Theta}$ denote a model class which is a uniform circuit family parameterized by parameters $\Theta \in \Omega$.  Let $\mathcal{L}$ be a loss function and  $\Omega_{\text{ERM}}$ be the set of empirical risk minimizers:
\begin{equation}
\Omega_{\Theta} = \{ \hat{\Theta} \in \Omega \mid \hat{\Theta} = \argmin\nolimits_{\Theta \in \Omega}\mathcal{L}(\mathcal{U}_{\Theta},D) \}.
\end{equation}
We say model class $\mathcal{U}_\Theta$ is \emph{identifiable with respect to the dataset $D$} if  for all $Z \in \R^{d \times n'}$, and for all pairs of empirical risk minimizers $\Theta, \Theta' \in \Omega_{\text{ERM}}$ we have $\mathcal{U}_{\Theta}$ and $\mathcal{U}_{\Theta'}$ compute the same function, i.e., they agree on all inputs (are the same uniform circuit family):  
\begin{equation}
\mathcal{U}_{\Theta}(Z) = \mathcal{U}_{\Theta'}(Z).
\end{equation}
\end{definition}
In establishing conditions for identifiability, it will be useful to refer to another condition relating models to datasets.
\begin{definition}[Realizability]  \label{def:realizability} Let $\Theta \in \Omega_H$ be an MHLA parameterization.  We say a dataset $D = \{(Z_i,y_i)\}_{i \in [N]}$ is \emph{realizable by a parameterization $\Theta$}  if $y_i = \text{MHLA}_{\Theta}(Z_i)$. 
\end{definition}

The definition of realizability can be modified to include independent noise at the expense of adding some terms to our analyses.  See \lemref{lem:error-identifiability} for details.  






Next, we prove that for the model class $\text{MHLA}$ there is an efficiently checkable condition (certificate) of the data $D$ that guarantees the model class is identifiable with respect to $D$.  Our results follow by reinterpreting the results of \thmref{thm:learning} with a focus on data conditions that uniquely determine the optimal regressor.  In this section we denote the mapping from data to feature space to be $\mathcal{H}$ and the mapping from parameters to feature space to be $p$ which are analogous to the $X$ and $\mathcal{T}_{\Theta}$ of \cref{eq:fundamental}.  We instantiate the feature mapping $\mathcal{H}$ and parameter mapping polynomial $p$ as follows.  

\begin{restatable}[Certificate of Identifiability]{lemma}{certifiableIdentifiability}
\label{lem:certifiable-identifiability}
Let dataset $D = \{(Z_i, y_i)\}_{i \in [N]}$ be a realizable dataset.  Let $\mathcal{H} = \{\mathcal{H}_n\}_{n =1}^{\infty}$ be a family of polynomials $\mathcal{H}_{n}: \R^{d \times n} \rightarrow \R^{\psi}$ for $\psi = {d \choose 2}d + d^2$ defined as follows.  We index the entries of $\mathcal{H}$ by taking the Kronecker product between all sets of pairs $\{j,k\}$ (for all $j,k \in [d]$) with with all $\ell \in [d]$. We define $\mathcal{H}(Z)_{\{j,k\} \ell}$ as in \Algref{alg:features-unique} to be 
\begin{equation}
\mathcal{H}(Z)_{\{j,k\}\ell} \defeq \langle z_{j:}, z_{k:}\rangle z_ {\ell n_i}. 
\end{equation}  
Then if $\lambda_{min}\left( \E_{D}\left[\mathcal{H}(Z)\mathcal{H}(Z)^T\right]\right) > 0$, we have that $\text{MHLA}_{\Theta}$ is identifiable with respect to $D$. 
\end{restatable}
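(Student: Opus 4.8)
The plan is to show that the last-column output of $\text{MHLA}_\Theta$ is an explicit \emph{linear} function of the feature vector $\mathcal{H}(Z)$, with coefficient matrix a fixed quadratic polynomial $p(\Theta)$ of the parameters, and then to combine the full-rank hypothesis on $\Lambda_D$ with realizability to pin down $p(\Theta)$ for every empirical risk minimizer.

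First I would expand the definition coordinate-wise. Writing $z_{j:}$ for the $j$-th row of $Z$ and $z_{\ell n}$ for the $(\ell,n)$ entry, a direct computation (the same one underlying \Algref{alg:poly}) gives, for each output coordinate $j'\in[d]$,
\[
\big(\text{MHLA}_\Theta(Z)\big)_{j'} \;=\; \sum_{j,k,\ell\in[d]} \Big(\sum_{h\in[H]} V_{h,j'j}\,Q_{h,k\ell}\Big)\,\langle z_{j:},z_{k:}\rangle\,z_{\ell n} ~ .
\]
Since $\langle z_{j:},z_{k:}\rangle=\langle z_{k:},z_{j:}\rangle$, the only monomials of $Z$ that appear are indexed by an unordered pair $\{j,k\}$ together with $\ell\in[d]$ --- exactly the $\psi=\binom{d}{2}d+d^2$ coordinates of $\mathcal{H}(Z)$. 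Collecting terms (symmetrizing the bracketed coefficient over $j\leftrightarrow k$) yields a quadratic polynomial map $p:\Omega_H\to\R^{d\times\psi}$ with the property that $\text{MHLA}_\Theta(Z)=p(\Theta)\,\mathcal{H}(Z)$ for every $Z$ of every sequence length $n$. This is the $\mathcal{H}$/$p$ analogue of the $\mathcal{X}$/$\mathcal{T}_\Theta$ identity of \Secref{sec:polyalg}, and it already proves the ``if'' direction of \corref{cor:identifiability}.

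Next I would use realizability. Let $\Theta_0\in\Omega_H$ realize $D$, so $\mathcal{L}(\text{MHLA}_{\Theta_0},D)=0$; hence every empirical risk minimizer $\hat\Theta\in\Omega_H$ also attains zero empirical loss, i.e.\ $\text{MHLA}_{\hat\Theta}(Z_i)=y_i=\text{MHLA}_{\Theta_0}(Z_i)$ for all $i\in[N]$. Setting $\Delta\defeq p(\hat\Theta)-p(\Theta_0)\in\R^{d\times\psi}$, the feature-space identity gives $\Delta\,\mathcal{H}(Z_i)=0$ for every $i$, so for each row $\delta$ of $\Delta$ we get $\delta^{T}\big(\tfrac1N\sum_{i}\mathcal{H}(Z_i)\mathcal{H}(Z_i)^{T}\big)\delta=\delta^{T}\Lambda_D\,\delta=0$. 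Because $\lambda_{\min}(\Lambda_D)>0$, the matrix $\Lambda_D$ is positive definite, forcing $\delta=0$ for every row, i.e.\ $p(\hat\Theta)=p(\Theta_0)$. Thus all empirical risk minimizers share the same value of $p$.

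Finally, for any two empirical risk minimizers $\Theta,\Theta'\in\Omega_H$ and any input $Z\in\R^{d\times n'}$ of arbitrary length, the feature-space identity yields $\text{MHLA}_\Theta(Z)=p(\Theta)\mathcal{H}(Z)=p(\Theta_0)\mathcal{H}(Z)=p(\Theta')\mathcal{H}(Z)=\text{MHLA}_{\Theta'}(Z)$, which is exactly identifiability with respect to $D$ in the sense of \defref{def:identifiability}. I expect the main obstacle to be the bookkeeping in the first step: verifying that no monomials other than $\langle z_{j:},z_{k:}\rangle z_{\ell n}$ arise, carrying out the $j\leftrightarrow k$ symmetrization so that $p$ lands in exactly $\R^{d\times\psi}$ (matching the count $\psi=\binom{d}{2}d+d^2$), and checking that the identity $\text{MHLA}_\Theta=p(\Theta)\mathcal{H}(\cdot)$ holds uniformly across all sequence lengths, so that identifiability extends to out-of-distribution inputs of any length rather than merely the lengths $n_i$ present in $D$. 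The extension to approximate empirical risk minimizers replaces ``$\Delta\,\mathcal{H}(Z_i)=0$'' by a small residual bound and uses $\lambda_{\min}(\Lambda_D)$ quantitatively, as in \lemref{lem:error-identifiability}.
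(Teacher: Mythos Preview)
Your proposal is correct and follows essentially the same approach as the paper: linearize $\text{MHLA}_\Theta$ in the feature space $\mathcal{H}(Z)$ via an explicit coefficient map $p(\Theta)$ (the paper writes the rows $p_a(\Theta)$ separately, splitting into the ``pairs'' $\{j,k\}$ and ``singles'' $\{j^2\}$ blocks), then use realizability to force zero residuals and the positive-definiteness of $\Lambda_D$ to conclude $p(\hat\Theta)=p(\Theta_0)$ for every ERM. Your quadratic-form argument $\delta^T\Lambda_D\delta=0\Rightarrow\delta=0$ is exactly the content of the paper's appeal to uniqueness of the OLS solution under $\lambda_{\min}(\Lambda_D)>0$.
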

Next we construct a mapping $p: \Omega \rightarrow \R^{d \times \psi}$ that partitions the parameter space into  equivalence classes of parameters that compute the same function.  This is akin to matching coefficients in polynomial regression.  This mapping defines a meaningful notion of ``distance'' between different attention parameters by constructing a feature space in which equivalent models have the same representation.  We denote the $a$'th row of $p$ to be $p_a: \Omega \rightarrow \R^{\psi}$ and define it as follows.   
\begin{corollary}\label{cor:identifiability-formal} Let $\{p_a\}_{a \in [d]}$ be a collection of polynomials such that 
$p_a(\Theta): \Omega_H \rightarrow \R^\psi$ is defined as follows.  Each $p_a(\Theta)$ is indexed by pairs $\{j,k\}$ for $j,k \in [d]$ and $\ell \in [d]$ defined to be   
\begin{equation}
p_a(\Theta)_{\{j,k\}\ell} = \sum_{h \in [H]} \left( V_{h,aj} Q_{k\ell} + V_{h,ak} Q_{j\ell} \right) ~ .
\end{equation}  
Let the polynomial $p: \Omega \rightarrow \R^{d \times \psi}$ be $p \defeq (p_1,p_2,...,p_d)$.  Then for any pair of parameters $\Theta,\Theta' \in \Omega_H$ we have $\text{MHLA}_{\Theta} = \text{MHLA}_{\Theta'}$ if and only if  $p(\Theta) = p(\Theta')$.  
\end{corollary}

\begin{algorithm}
\small
\caption{Constructing Features for Certificates of Identifiability}
\label{alg:features-unique}
\begin{algorithmic}[1]
\STATE{\textbf{Input: } Data $D \defeq \{Z_i\}_{i \in [N]}$ for $Z_i \in \R^{d \times n_i}$}
\STATE{\textbf{Output: } feature vectors $\mathcal{H}(Z_i)$ for $i \in [N]$}
\FOR{$Z_i \in D$}
\STATE{Let $z_{1:},z_{2:},...z_{d:}$ be the rows of $Z_i$ and let $z_{ab}$ be the $(a,b)$ entry of $Z_i$}
\FOR{$\text{sets } \{j,k\} \text{ in Distinct Pairs of Indices in } [d]^2$ }
\FOR{$\ell \in [d]$}
    \STATE{$\mathcal{H}(Z_i) = \mathcal{H}(Z_i) \circ \left[\langle z_{j:}, z_{k:} \rangle z_{\ell n_i}\right]$}
\ENDFOR
\ENDFOR
\FOR{$j \in [d]$}
	\FOR{$\ell \in [d]$}
    		\STATE{$\mathcal{H}(Z_i)= \mathcal{H}(Z_i) \circ \left[ \|z_j\|^2z_{\ell n_i}\right]$}
	\ENDFOR
\ENDFOR
\ENDFOR
\STATE{\textbf{Return: } $\{\mathcal{H}(Z_i)\}_{i \in [N]}$}
\end{algorithmic}
\end{algorithm}

We give an overview of a few results building on our certifiable identifiability machinery:

First, data drawn from independent noise is certifiably identifiable.  If the data matrices $\{Z_i\}_{i \in [N]}$ are drawn with each entry being standard normal noise, then $\text{MHLA}_{\Theta}$ for $\Theta \in \Omega_H$  is identifiable with respect to the data. The statement holds beyond standard normals to distributions satisfying weak moment conditions.  The result is stated with population risk instead of empirical risk to simplify the statement.  
\begin{restatable}[Independent input noise yields identifiability]{lemma}{independent}
\label{lem:independent}
 Let $(Z,y) \sim \mathcal{D}$ be a realizable dataset.  Let $Z$ be drawn from a distribution $\mathcal{Z}$ where the $(a,b)$-th entry of $Z$ denoted by $Z_{ab}$ is drawn i.i.d.\ from a distribution $\nu$ over $\R$ for all $a \in [d]$ and $b \in [n]$.  Let the second and fourth moment of $\nu$ be denoted $m_2$ and $m_4$ respectively.  Let $m_2 > 0$ and $m_4 > m_2^2$.  Then $\text{MHLA}_{\Theta}$ for $\Theta \in \Omega_H$ is identifiable with respect to $D$. That is to say, for any population risk minimizers $\Theta, \Theta' \in \Omega_{\text{PRM}}$:
 \begin{equation}
 \text{MHLA}_{\Theta} = \text{MHLA}_{\Theta'}.
 \end{equation}  
\end{restatable}

Second, when specialized to the case of Multi Head Linear Attention $\text{MHLA}_{\Theta}$ with 
 more than $d^2$ heads we can avoid the realizability assumption entirely.  This is because the class of $\text{MHLA}$ with an arbitrary number of heads is linear in the feature space $\mathcal{H}$ given in \lemref{lem:certifiable-identifiability}. 

\begin{restatable}[Identifiability without realizability for MHLA with arbitrarily many heads]{lemma}{arbitraryHeads}
\label{lem:arbitrary-heads}
Let dataset $D = \{(Z_i, y_i)\}_{i \in [N]}$ be any dataset drawn i.i.d from a distribution $\mathcal{D}$.  Let $\mathcal{H}$ be defined as in \lemref{lem:certifiable-identifiability}.  Then if  $\lambda_{\min}\left(\E_D[\mathcal{H}(Z)\mathcal{H}(Z)^T]\right) > 0$ then $\text{MHLA}_{\Theta}$ for $\Theta \in \Omega_{H}$ for any $H \in [d^2,\infty)$ is identifiable with respect to the data $D$.  That is,  
\begin{equation}
\text{MHLA}_{\Theta} = \text{MHLA}_{\Theta'}
\end{equation}
for all pairs of empirical risk minimizers $\Theta, \Theta' \in \Omega_{\text{ERM}}$. 
\end{restatable}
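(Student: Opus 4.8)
The plan is to use the fact that, with at least $d^2$ heads, the hypothesis class $\{\text{MHLA}_\Theta : \Theta \in \Omega_H\}$ is \emph{linear} in the feature map $\mathcal{H}$, so that empirical risk minimization reduces to an unconstrained least-squares problem whose minimizer is unique exactly when $\Lambda_D$ is nonsingular. The key identity, already implicit in \Secref{sec:polyalg}, is that for every $\Theta \in \Omega_H$ and every input $Z$ with final column index $n$,
\begin{equation}
\text{MHLA}_\Theta(Z)_a \;=\; \langle\, p_a(\Theta),\, \mathcal{H}(Z)\,\rangle \qquad (a \in [d]),
\end{equation}
where $p_a$ is the polynomial of \cref{cor:identifiability-formal}: writing $\mathcal{T}_\Theta = \sum_h \text{flatten}(V_h)\text{flatten}(Q_h)^T$, the coefficient $p_a(\Theta)_{\{j,k\}\ell}$ is the symmetrization $(\mathcal{T}_\Theta)_{(a,j),(k,\ell)} + (\mathcal{T}_\Theta)_{(a,k),(j,\ell)}$ (with the corresponding diagonal term for $j=k$), which is all that survives because $\langle z_{j:},z_{k:}\rangle$ is symmetric in $j,k$. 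In particular the empirical loss $\mathcal{L}(\Theta,D) = \frac1N\sum_i \|\text{MHLA}_\Theta(Z_i) - y_i\|^2$ depends on $\Theta$ only through $p(\Theta) = (p_1(\Theta),\dots,p_d(\Theta)) \in \R^{d\times\psi}$.

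Next I would observe that when $H \ge d^2$ the map $\Theta \mapsto p(\Theta)$ is surjective onto $\R^{d\times\psi}$: the matrix $\mathcal{T}_\Theta$ ranges over all of $\R^{d^2\times d^2}$ — any such matrix has rank at most $d^2$ and hence a decomposition into $\le d^2 \le H$ rank-one terms, with unused heads set to zero — and the linear symmetrization map $W \mapsto p(W)$ is itself onto, since a prescribed coefficient array is matched by the $W$ whose below-diagonal row-blocks are set to zero. Consequently, minimizing $\mathcal{L}(\Theta,D)$ over $\Omega_H$ is equivalent to minimizing $\frac1N\sum_i\sum_a(\langle p_a,\mathcal{H}(Z_i)\rangle - y_{i,a})^2$ over unconstrained $p \in \R^{d\times\psi}$. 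This objective separates over $a\in[d]$, and each summand is an ordinary least-squares problem in $p_a$ with Hessian $2\Lambda_D$; its normal equations read $\Lambda_D\, p_a = \frac1N\sum_i y_{i,a}\,\mathcal{H}(Z_i)$. Since $\lambda_{\min}(\Lambda_D) > 0$, the matrix $\Lambda_D$ is invertible, so each $p_a$ — and hence $p(\Theta)$ — is uniquely pinned down and is the same for every $\Theta \in \Omega_{\text{ERM}}$. Finally, \cref{cor:identifiability-formal} turns $p(\Theta) = p(\Theta')$ into $\text{MHLA}_\Theta = \text{MHLA}_{\Theta'}$, which is exactly identifiability with respect to $D$.

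I expect the only real content — and the reason realizability can be dropped here but not in \lemref{lem:certifiable-identifiability} — to be the surjectivity step: once $H \ge d^2$ makes $\Theta\mapsto p(\Theta)$ onto, the problem is genuine unconstrained linear regression and strict convexity under $\Lambda_D\succ 0$ finishes it, with no need for the data to be generated by any MHLA. For $H < d^2$ the image of $\Theta \mapsto p(\Theta)$ is a nonlinear, rank-constrained subset of $\R^{d\times\psi}$, and one must instead invoke realizability to argue, as in \lemref{lem:certifiable-identifiability}, that the realizing parameters already attain the unconstrained least-squares optimum, after which the same uniqueness argument applies. A minor bookkeeping point is the factor of two on diagonal features $\|z_{j:}\|^2 z_{\ell n}$ coming from \cref{alg:features-unique}; it is harmless, since rescaling a coordinate of $\mathcal{H}$ together with the matching coordinate of each $p_a$ changes neither the inner products nor the nonsingularity of $\Lambda_D$.
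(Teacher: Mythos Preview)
Your proposal is correct and follows essentially the same route as the paper: both argue that when $H \ge d^2$ the map $\Theta \mapsto p(\Theta)$ is surjective (via SVD of an arbitrary $W \in \R^{d^2\times d^2}$), so ERM over $\Omega_H$ coincides with unconstrained least-squares in $\mathcal{H}$-feature space, whose minimizer is unique when $\Lambda_D \succ 0$; the paper's proof is terser but makes the identical point that surjectivity of $p_a$ substitutes for realizability. Your added detail on the symmetrization map being onto, and your remark on why $H < d^2$ forces the realizability route of \lemref{lem:certifiable-identifiability}, are helpful elaborations the paper leaves implicit.
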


We also add a quantitative version of identifiability with precise treatment of issues related to error.  (For a corresponding statement of realizability with noise see \lemref{lem:error-identifiability-noise}.)  
\begin{restatable}[Identifiability with Error]{lemma}{errorIdentifiability}
\label{lem:error-identifiability}  
Let $\Omega_{\epsilon-\text{ERM}}$ be the set of $\epsilon$-approximate empirical risk minimizers,  
\begin{multline}\nonumber
\Omega_{\epsilon-\text{ERM}} =\\ \left\{ \Theta \in \Omega_H ~\big|~ \E_{(Z_i,y_i) \in D}\left[\left(\text{MHLA}_{\Theta}(Z_i) - y_i\right)^2\right] \leq \epsilon \right\}.
\end{multline}
Then we have for any $\Theta,\Theta' \in \Omega_{\epsilon-\text{ERM}}$ that for all inputs $Z \in \R^{d \times n}$ 
\begin{equation}
\|\text{MHLA}_{\Theta}(Z) - \text{MHLA}_{\Theta'}(Z)\| \leq   \frac{\epsilon}{\lambda_{\min}\left(\Lambda_D\right)} \|Z\|_F^6.
\end{equation}
\end{restatable}

We prove all the above statements in \cref{heads-proof}.

\paragraph{Application to learning Universal Turing Machines.} \label{sec:utm} In Appendix \ref{sec:realizableutm}, we demonstrate that MHLAs can (autoregressively) express universal Turing machines with polynomially bounded computation histories. In this context, our identifiability results imply that, given a certifiably identifiable dataset of Turing machines and their computation histories on input words, empirical risk minimization and in particular \Algref{alg:poly} will learn the universal Turing machine in a strong sense (\lemref{lem:learning-utm} for learning, \lemref{lem:learnUTMcert} with identifiability). That means at test time the learned MHLA will simulate any Turing Machine on any input word up to a given size for a bounded number of steps. For more detail see \ref{sec:learn-utm}

\begin{restatable}[Learning UTM from Certifiably Identifiable Data]{lemma}{learnUTMcert}
\label{lem:learnUTMcert}  Let $D = \{(Z_i,y_i)\}_{i \in [N]}$ be a dataset satisfying $y_i = \text{MHLA}_{\Theta}$ for $\Theta \in \Omega_H$ being the expressibility parameters of \lemref{lem:utm-expressibility} for the set of TM's/words $(M,x) \in \Delta(\hat{\mathcal{Q}}, \hat{\Sigma}, \hat{n}, \hat{\Phi})$.  If $D$ is certifiably identifiable with $\lambda_{min}(\Lambda_D) > \eta$, then there is a $\text{poly}(d,N,\hat{Q},\hat{\Sigma},\hat{n},\hat{\Phi},\eta^{-1})$ time algorithm that outputs a set of parameters $\hat{\Theta} \in \Omega_{d^2}$ such that for all TM's $M$ and input words $x$ in $\Delta(\hat{\mathcal{Q}}, \hat{\Sigma}, \hat{n}, \hat{\Phi})$, we have 
 \begin{equation}
\text{CH}_{\hat{\Theta}}(M,x)^{c(t)}[:-k_t] = x^t ~ .
 \end{equation}
 The $c(t)$ step of the autoregressive computation history of $\hat{\Theta}$ is equal to the $t$'th step of the computation history of $M$ on $x$.  
\end{restatable}

\section{Realizability of Universal Automata in MHLA}\label{sec:realizableutm}

We also include an application of our theory on learnability and identifiability to the problem of learning a universal Turing machine (UTMs) with polynomially bounded computation length.  We prove such a UTM is expressible via MHLA in \lemref{lem:utm-expressibility}, and show that for certifiably identifiable data the learned MHLA generalizes to any TM $M$ and input word $x$ in \lemref{lem:learnUTMcert}.  

\begin{restatable}[UTM Expressibility]{lemma}{UTM}
\label{lem:utm-expressibility}
Let $\Delta(\hat{\mathcal{Q}}, \hat{\Sigma}, \hat{n}, \hat{\Phi})$ be the set of Turing machines $M = \{\delta, \Sigma,\mathcal{Q}, q_{start}, q_{accept}, q_{reject}\}$ and words $x \in \Sigma^*$ with number of states, size of alphabet, size of input, and number of steps in computation history bounded by $\hat{\mathcal{Q}}, \hat{\Sigma}, \hat{n}, \hat{\Phi}$ respectively.  
For any $(M,x) \in \Delta$, let $\{x_t\}_{t \in [\Phi]}$ be the computation history of the UTM on $(M,x)$.  Let the autoregressive computation history (see \defref{def:auto-computation-history}) of $\text{MHLA}_{\Theta}$ on input $(M,x)$ be denoted $\text{CH}_{\Theta}(M,x) = \{Z^1,Z^2,...,Z^{\Phi}\}$.  Then there exists a set of parameters $\Theta \in \Omega_{H}$ for $H = O(\hat{n}\hat{\Phi}\hat{\Sigma})$ and embedding dimension $d = O(\hat{n}\hat{\Phi}\hat{\Sigma} \max(\hat{\Sigma},\hat{\mathcal{Q}}))$, such that for all $(M,x) \in \Delta$, the TM computation history at time step $t$ is equivalent to the autoregressive computation history at time step $c(t)$ where $c(t) \leq O((n+t)t)$ i.e $Z^{c(t)}[:-\text{length}(x^{t}))] = x^{t}$.   Furthermore, this can be achieved with 2 bits of precision.  
\end{restatable}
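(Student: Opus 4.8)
The plan is to exhibit a single parameter set $\Theta$ that, run autoregressively in the sense of \defref{def:auto-computation-history}, executes a fixed ``micro-program'' simulating the universal machine one tape cell at a time, and then to verify the bounds on $H$, $d$, the rescaling $c(t)$, and the precision; the construction is self-contained and does not invoke the learnability results above. First I would fix the token encoding: each column of $Z$ encodes one tape cell of one configuration of the computation history, as a concatenation of one-hot blocks --- the written symbol ($\hat{\Sigma}$ coordinates); a head-present flag together with, when set, the current control state ($1+\hat{\mathcal Q}$ coordinates); a positional block that is a single one-hot over (configuration-index, within-configuration-position) pairs, so that ``the cell at the same position, one configuration earlier'' is a single one-hot match rather than a conjunction (this product index set is what contributes the $\hat n\hat{\Phi}$ factor); and a bounded number of scratch coordinates carrying the micro-program state and a pointer. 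The input $(M,x)\in\Delta$ is presented as a prefix of ``rule tokens'', one per entry of $\delta$, each with a key block that is a one-hot over (state, read-symbol) pairs (the $\hat{\mathcal Q}\hat{\Sigma}$ factor) and a value block encoding the resulting (state, write-symbol, move); this is followed by the initial configuration ($x$ laid out on the tape, the head at cell $0$ in state $q_{start}$). All token entries and all weight entries will lie in $\{0,1\}$.

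\emph{One TM step $\to$ $O(n+t)$ autoregressive steps.} Simulating one TM step decomposes into: (A) a single autoregressive step that reads off the head cell of the current configuration, recovers $(q,\sigma)$, looks up the matching row of $\delta$, and stashes $(q',\sigma',D)$ in the pointer token; and (B) a sweep that appends, one autoregressive step per cell, the cells of the next configuration --- copying the symbol of the corresponding cell of the previous configuration when that cell is not the head, and writing $\sigma'$, clearing or setting the head flag, and installing $q'$ at the head cell and its $D$-neighbour. A configuration at TM-time $t$ has $O(n+t)$ cells, so simulating the first $t$ TM steps costs $\sum_{s\le t}O(1+(n+s))=O((n+t)t)$ autoregressive steps, which is the rescaling $c(t)\le O((n+t)t)$ in the statement, with $Z^{c(t)}$ holding exactly $x^{t}$ in its last $\text{length}(x^{t})$ columns.

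\emph{Heads, dimensions, precision, and induction.} Each micro-operation is realized by a small group of heads $(V_h,Q_h)$, using the fact that for one-hot encodings the score vector $Z^{T}Q_h z_n$ can be made to equal the indicator of the unique column the operation must read: a fixed relational offset in the positional block for ``copy from the previous configuration'', a match on the head flag for ``find the head'', and a match of the current (state, read-symbol) pair against the rule keys for ``look up $\delta$''. For such a head the score is one-hot, so $Z(Z^{T}Q_h z_n)$ is exactly the selected column and $V_h$ then copies the relevant sub-block into a \emph{disjoint} output region reserved for that head; hence the $H$ heads never interfere and no coordinate ever exceeds a small constant, so $2$ bits of precision suffice. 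Reserving one write region of $O(\max(\hat{\Sigma},\hat{\mathcal Q}))$ coordinates per head gives $d=O(H\max(\hat{\Sigma},\hat{\mathcal Q}))$, and multiplexing the heads over the micro-phase, over the symbol under the head ($\hat{\Sigma}$), and over the current configuration length (up to $O(\hat n+\hat{\Phi})$) gives $H=O(\hat n\hat{\Phi}\hat{\Sigma})$, hence $d=O(\hat n\hat{\Phi}\hat{\Sigma}\max(\hat{\Sigma},\hat{\mathcal Q}))$. I would then prove by induction on the autoregressive step count that the decoded tape after step $c(t)$ equals $x^{t}$, and observe that since $\Theta$ depends only on $(\hat{\mathcal Q},\hat{\Sigma},\hat n,\hat{\Phi})$ and not on the particular $(M,x)$, the construction is uniform over $\Delta$.

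\emph{Main obstacle.} The crux is doing selection with \emph{softmax-free} attention: the score vector $Z^{T}Q_h z_n$ must be a genuine one-hot, not merely peaked, which forces all positional and rule-key blocks to be honest one-hot codes and, in particular, forces every conjunctive lookup (``same position \emph{and} previous configuration'', ``this state \emph{and} this symbol'') to be encoded as a one-hot over a \emph{product} index set --- the source of the multiplicative blowups in $d$ --- while simultaneously maintaining a disjoint-write discipline so that the $H$ heads' contributions add without overflow. Laying out the micro-program so that the sweep lengths sum to exactly $O((n+t)t)$ and so that a single $\Theta$ handles every $(M,x)\in\Delta$ is the remaining (routine but lengthy) bookkeeping.
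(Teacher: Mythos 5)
Your proposal is correct and follows essentially the same route as the paper's proof: one-hot/orthogonal encodings of symbols, states, and positions so that softmax-free attention performs exact table lookup (the paper packages this as Lookup/Copy/If-Then primitives in an ``Autoregressive MHLA Program''), a per-TM-step sweep that appends the next configuration cell by cell with head-position case analysis, disjoint write regions so heads add without interference, and the $\sum_{s\le t}O(n+s)=O((n+t)t)$ accounting for $c(t)$. The only difference is presentational --- you compile the micro-program directly into heads rather than through the paper's intermediate program abstraction.
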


Our construction bears similarities to \citep{perez2019turing,hahn2020theoretical,merrill2023parallelism,merrill2022saturated,merrill2021power,liu2022transformers,feng2023towards}; the high-level idea is write down every letter in the computation history of $M$ on $x$.  If we use orthogonal vectors to encode every letter, state, and positional embedding we arrive at a natural construction involving a few basic primitives copy, lookup, and if-then-else.  For details see discussion \secref{sec:expressibility} and Proof \ref{proof-expressibility}    

\section{Application to Learning Universal Turing Machines}
\label{sec:learn-utm} We apply our algorithmic and identifiability machinery to 
show that an important computational procedure is representable and learnable as an MHLA: namely, a restricted class of universal Turing machines (UTMs) with bounded computation history.
We must first generalize our previous MHLA definition to enable multi-step computation:
\begin{definition} [Autoregressive MHLA] Let $Z^0$ be an input matrix in dimension $\R^{d \times n}$.  We define the iterative process of  \emph{$\Phi$-step autoregressive MHLA} as follows: starting from $t=0$, let the next token $y^{t+1} \in \R^d$ be:
\begin{equation}
y^{t+1} = \text{MHLA}_{\Theta}(Z^t) ~ ,
\end{equation}
and, for all $t \in [\Phi]$, let $Z^{t+1} \in \R^{d \times (n+1)}$  be the concatenation: 
\begin{equation}
Z^{t+1} =  Z^t   \circ y^t ~  .
\end{equation}
\end{definition}
Next we define the computation history of an autoregressive model analogously to the computation history of a Turing machine.  
\begin{definition} [Autoregressive Computation History] \label{def:auto-computation-history}
We refer to $\text{CH}_{\Theta}(Z) = \{Z^t\}_{t \in [\Phi]}$ as the \emph{computation history} of the $\Phi$-step autoregressive MHLA.  We denote the $t$-th step of the computation history as $\text{CH}_{\Theta}^t(Z) = Z^t$.
\end{definition}

We will often use the notation $Z_t[:-k]$ to denote the last $k \in \mathbb{Z}^+$ tokens of $Z_t$.  Often, $Z$ will be the embeddings corresponding to a word $x$ in a language $\mathcal{L}$, in which case we will use the notation $\text{CH}_{\Theta}(x)$ and $\text{CH}_{\Theta}(Z)$ interchangeably. For pedagogical discussion on how to map embeddings to letters in an alphabet, see \Secref{sec:add-def}         

Although the theory derived in this paper applies to all functions expressible by MHLAs, we are particularly interested in the task of learning {\it universal Turing machines} (UTMs). Let $\Sigma$ be an alphabet. Let $\mathcal{Q}$ be a set of states that includes $\{q_{start}, q_{accept}, q_{reject}\}$ a start, accept, and reject state respectively. Let $\delta: \mathcal{Q} \times \Sigma \rightarrow \mathcal{Q} \times \Sigma \times \{L/R\}$ be a transition function that takes an alphabet and state symbol and maps to a state transition, an output symbol, and a head movement left or right.  Typically there is also a tape alphabet $\Gamma$ for which the input alphabet $\Sigma$ is a subset.      

\begin{definition} [Accept TM] Let $M = \{\delta, \Sigma, \Gamma,\mathcal{Q}, q_{start}, q_{accept}, q_{reject}\}$ be a TM.  Let $x \in \Sigma^*$ be all strings in the alphabet $\Sigma$. Then let $A_{\text{TM}}$ be the language $A_{\text{TM}} = \{(M, x) \mid M \text{ accepts } x\}$.
\end{definition}

The UTM constructed in Turing's 1936 paper recognizes $A_{\text{TM}}$. In practice, we are most often interested in the behavior of TMs that run in polynomial time, and focus below on implementing a universal simulator for this restricted class:


\begin{definition} \label{sec:realizable-utm}(Polynomially Bounded Universal Turing Machine)  In general, a UTM is a recognizer for the language $A_{\text{TM}}$.  That is if $x$ is in $A_{\text{TM}}$, the UTM accepts, else, the UTM rejects or does not halt.  Let    
$A_{\text{TM}} \cap P$ be the language of input pairs $(M,x)$ for TM $M$ and word $x \in \Sigma^*$ such that $M$ decides $x$ in polynomial time. Here, we consider $\text{UTM}$ to be the polynomial time decider for $A_{\text{TM}} \cap P$.
\end{definition}
To define what it means for an autoregressive MHLA to perform the same computation as a TM, our main idea is to construct parameters for MHLA such that it executes the computation history of TM $M$ on input $x$. Let the UTM computation history at step $t$ include the contents $x_0, \ldots, x_{k_t}$ on the tape after $t$ transition steps of the Turing machine $M$, the current state $q_t$, and the current head position $h_t$.  Here $k_t$ is the number of tokens at timestep $t$. Then, there is a single-layer MHLA capable of simulating a UTM:

\UTM*



We include the full proof for the existence of $\Theta$ in the appendix. For simplicity, we adopt a naive embedding scheme that represents different letters in an alphabet as orthogonal unit vectors. This makes it easy to contrive embedding schemes that incorporate arbitrary polynomial-sized circuits which could compute whether $x \in \mathcal{L}(M)$.  Moreover, we adopt positional encodings that are simply orthogonal unit vectors.  Thus, in order to give each of $T$ tokens a unique ID, we would require $O(T)$ dimensional positional embeddings.

This can be combined with the learnability results above to yield a specialized result for UTMs:

\begin{restatable}[Learning a UTM]{lemma}{learningUTM}
\label{lem:learning-utm}
Let $\Theta \in \Omega_H$ in dimension $d$ be the $\text{MHLA}$ parameters in Lemma~\ref{lem:utm-expressibility}.   Let $\{M_i,x_i\}_{i \in [N]}$ be pairs of TM's $M$ and words $x$ of maximum length $n$ drawn i.i.d.\ from a distribution $\mathcal{D}$.  Let $Z_i = \text{Embed}( M_i,x_i)$. For each TM/word pair $(M_i,x_i)$ let $\text{CH}_{\Theta}(Z_i) = \{Z^1_i,Z^2_i,...,Z^{\Phi}_i\}$ be the $\Phi$-step autoregressive computation history of $\text{MHLA}_{\Theta}$ on $Z_i$.  Let $D$ be the dataset $D \defeq \{(\text{CH}_{\Theta}(Z_i)^t, y^{t+1}_{i}\}_{i \in [N], t\in [T]}$ where $y^{t+1}_i = \text{MHLA}_{\Theta}(Z^t_i)$. Then \Algref{alg:poly} applied to input $D$ returns $\hat{\Theta} \in \Omega_{H}$ for $H \leq d^2$ such that with probability $1-\delta$
\begin{equation}
\E_{(Z,y) \in \mathcal{D}}\left[\left(\text{MHLA}_{\hat{\Theta}}(Z) - y\right)^2\right]\leq \epsilon  
\end{equation}
for sample complexity $N = \text{poly}(d,\epsilon^{-1},\log(\delta^{-1}))$.   Then with probability $1-\delta$ over the randomness in the data, the probability over $\mathcal{D}$ that the $\Phi$-step autoregressive computation history $\text{CH}_{\hat{\Theta}}(M,x)$ and $\text{CH}_{\Theta}(M,x)$ differ is upper bounded by 
\begin{equation}
\Pr\nolimits_{(M,x) \sim \mathcal{D}}[\text{CH}_{\hat{\Theta}}(M,x) \neq \text{CH}_{\Theta}(M,x)] \leq O(\epsilon \Phi).
\end{equation}  
\end{restatable}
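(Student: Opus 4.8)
## Proof Plan for Lemma~\ref{lem:learning-utm}

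The plan is to combine the agnostic learnability result (\thmref{thm:learning}) with the expressibility result (\lemref{lem:utm-expressibility}) and then handle the error propagation through the autoregressive rollout via a union bound. First I would observe that, by construction of the dataset $D$ and by \lemref{lem:utm-expressibility}, the data is realizable: each pair $(\text{CH}_{\Theta}(Z_i)^t, y_i^{t+1})$ satisfies $y_i^{t+1} = \text{MHLA}_{\Theta}(Z_i^t)$ exactly, so the true parameters $\Theta$ achieve zero population risk. Hence the agnostic guarantee of \thmref{thm:learning} becomes a realizable guarantee: applying \Algref{alg:poly} to $D$ yields $\hat{\Theta}$ with $\E_{(Z,y)\sim\mathcal{D}}[\|\text{MHLA}_{\hat{\Theta}}(Z) - y\|^2] \le \epsilon$ with probability $1-\delta$, using $N = \text{poly}(d,\epsilon^{-1},\log(\delta^{-1}))$ samples. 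The subtlety here is that the dataset consists of $T$ correlated snapshots per trajectory rather than i.i.d.\ draws; I would address this by either (a) treating each trajectory as a single i.i.d.\ sample and bounding the per-trajectory loss, absorbing the factor $T$ into the polynomial, or (b) invoking the remark after \thmref{thm:learning} that \Algref{alg:poly} always attains the global minimum of the training loss regardless of independence, and combining with a uniform-convergence argument over trajectories.

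Next I would convert the squared-error bound into a statement about the probability that a single autoregressive step is wrong. Because the embedding scheme of \lemref{lem:utm-expressibility} uses orthogonal unit vectors and achieves the computation with $2$ bits of precision, the output $y$ at each step takes values in a discrete set, and there is a fixed separation $\gamma > 0$ (a constant depending only on the embedding normalization) between distinct valid outputs. Therefore, whenever $\|\text{MHLA}_{\hat{\Theta}}(Z) - \text{MHLA}_{\Theta}(Z)\| < \gamma/2$, rounding the output of $\text{MHLA}_{\hat{\Theta}}$ recovers exactly the output of $\text{MHLA}_{\Theta}$. By Markov's inequality applied to the population risk bound, $\Pr_{(Z,y)\sim\mathcal{D}}[\|\text{MHLA}_{\hat{\Theta}}(Z) - y\| \ge \gamma/2] \le 4\epsilon/\gamma^2$, which is $O(\epsilon)$ since $\gamma$ is a constant. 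I should be careful that the marginal distribution over snapshots $Z^t$ at every step $t \in [\Phi]$ is dominated by (or equal to) the sampling distribution $\mathcal{D}$ used to build $D$ — this holds because $D$ is defined to contain all intermediate steps of the rollout — so the per-step error probability at every step $t$ is uniformly $O(\epsilon)$.

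Finally I would apply a union bound over the $\Phi$ steps of the autoregressive rollout. Conditioned on the first $t-1$ steps being correct, the input $Z^t$ seen by $\text{MHLA}_{\hat{\Theta}}$ equals the input $\text{CH}_{\Theta}^t(M,x)$ that would have been produced by the true model, and the distribution of this input is exactly the step-$t$ marginal of $\mathcal{D}$, for which the error probability is $O(\epsilon)$. Summing, $\Pr_{(M,x)\sim\mathcal{D}}[\text{CH}_{\hat{\Theta}}(M,x) \neq \text{CH}_{\Theta}(M,x)] \le \sum_{t=1}^{\Phi} O(\epsilon) = O(\epsilon\Phi)$, as claimed; the outer $1-\delta$ comes directly from the high-probability guarantee of \thmref{thm:learning}.

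I expect the main obstacle to be making the per-step-to-whole-trajectory reduction rigorous: the inputs $Z^t$ along a rollout are \emph{generated} by $\hat{\Theta}$ itself, not drawn fresh from $\mathcal{D}$, so one must argue inductively that as long as no error has occurred the two rollouts coincide exactly (using the discreteness/rounding argument), which keeps the relevant input distribution pinned to the step-$t$ marginal of $\mathcal{D}$. Secondary care is needed in (i) confirming that the step-$t$ marginals are all controlled by the single distribution $\mathcal{D}$ over which $N$ samples are drawn, and (ii) choosing the rounding threshold $\gamma$ from the explicit orthogonal-embedding construction so that the $O(\epsilon)$ per-step bound is genuine; both are routine given the $2$-bit-precision claim in \lemref{lem:utm-expressibility}.
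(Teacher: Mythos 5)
Your proposal follows essentially the same route as the paper's proof: invoke \thmref{thm:learning}, use realizability of the dataset (the true $\Theta$ attains zero risk, so the agnostic bound becomes an absolute $\epsilon$ bound), and then union bound over the $\Phi$ autoregressive steps. In fact your handling of the per-step rounding (via the orthogonal-embedding separation and Markov's inequality) and the inductive coupling of the two rollouts until the first error is more explicit than the paper's own argument, which simply asserts that each step rounds to an incorrect set of tokens with probability at most $\epsilon$.
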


Finally, if the dataset $D$ is certifiably identifiable, then generalization holds out-of-distribution.  For proof see \cref{proof:learnUTMcert}.  

\learnUTMcert*

\section{Proof of the Main Theorem}
\mainTheorem*
\begin{proof} \label{main-proof}
First we write down the loss:  
\begin{align}
\mathcal{L}_{\Theta}(\{(Z_i, y_i)\}_{i \in [N]}) &\defeq \frac{1}{N}\sum_{i \in [N]}\left\| \sum_{h \in [H]}V_hZ_i (Z_i^T Q_h Z[:,n_i]) - y_i\right\|_F^2\\
&= \frac{1}{N}\sum_{i \in [N]} \sum_{a \in [d]}\left( \sum_{h \in [H]} e_a^TV_hZ_i (Z_i^T Q_h Z[:,n_i]) - y_{i,a}\right)^2
\end{align}

Observe that the one layer attention network is a quadratic polynomial in $\{V_h,Q_h\}_{h \in [H]}$. 

\begin{equation}
= \frac{1}{N}\sum_{i \in [N]}\sum_{a \in [d]} (\left\langle \mathcal{T}_{\Theta}, X_{i,a}\right\rangle - y_{i,a})^2
\end{equation}
Here 
\begin{equation}
\mathcal{T}_{\Theta} \defeq \sum_{h \in [H]}\text{flatten}(V_h) \text{flatten}(Q_h)^T = \sum_{h \in [H]}\begin{bmatrix} V_{h,00} Q_{h,00} & V_{h,00} Q_{h,01} & \hdots & V_{h,00} Q_{h,dd} \\
V_{h,01} Q_{h,00} & V_{h,01} Q_{h,01} & \hdots & V_{h,01} Q_{h,dd}\\
\vdots & \vdots & \vdots \\
V_{h,dd} Q_{h,00} & V_{h,dd} Q_{h,01} & \hdots & V_{h,dd} Q_{h,dd} 
\end{bmatrix} 
\end{equation}

Now we relax the objective where we replace $\mathcal{T}_{\Theta}$ with an unconstrained matrix $W \in \R^{d^2 \times d^2}$.  Another way to put it is that $\mathcal{T}_{\Theta}$ is rank-$H$ but $W$ can be a general matrix. Because the space of general rank matrices is larger, we have written down a relaxation guaranteed to have a smaller loss.  Furthermore the loss can be optimized via ordinary least squares.    

\begin{multline}
\min_{W \in \R^{d^2 \times d^2}}\mathcal{L}_{W}(\{(Z_i, y_i)\}_{i \in [N]}) \defeq \frac{1}{N}\sum_{i \in [N]}\sum_{a \in [d]} (\left\langle W, X_{i,a}\right\rangle - y_{i,a})^2\\ \leq 
\min_{\Theta \in \Omega_{H}} \mathcal{L}_{\Theta}(\{(Z_i, y_i)\}_{i \in [N]}) + \epsilon
\end{multline}

Thus the optimum of the regression with respect to the data achieves optimum of the loss to error $\epsilon$ in time $O(\frac{1}{\epsilon}d^4 N)$.  The sample complexity to achieve error $\epsilon$ is then $O(\frac{1}{\epsilon}(d^4 + \log(\delta^{-1})))$ with probability $1 - \delta$ over the data distribution.  Furthermore, if we take the SVD of $W = \sum_{i \in [\hat{H}]} A_i B_i^T$ where we absorb the singular values into the left and right singular vectors we have for $\hat{\Theta} = \{\text{Fold}(A_h), \text{Fold}(B_h)\}_{i \in [\hat{H}]}$.  Let $\hat{V}_h = \text{Fold}(A_h)$ and $\hat{Q}_h = \text{Fold}(B_h)$
\begin{multline}
\mathcal{L}_{\hat{\Theta}}(\{(Z_i, y_i)\}_{i \in [N]}) \defeq \frac{1}{N}\sum_{i \in [N]}\left\| \sum_{h \in [\hat{H}]}\hat{V}_hZ_i (Z_i^T \hat{Q}_h Z_i[:,n_i]) - y_i\right\|_F^2\\
= \frac{1}{N}\sum_{i \in [N]} \sum_{a \in [d]}\left( \sum_{h \in [\hat{H}]}\hat{V}_hZ_i (Z_i^T \hat{Q}_h Z_i[:,n_i]) - y_{i,a}\right)^2 \leq \epsilon
\end{multline}

as desired.  
\end{proof}

\section{Proofs from Identifiability Section}

First, we start with a general lemma (Lemma~\ref{lem:general-certificate}) which states a sufficient condition for identifiability of any model class that can be written as an inner product of a polynomial of parameters $\Theta$ with a polynomial feature mapping $\mathcal{H}$.  If the data is realizable by the model class and $\Lambda_D = \E_D\left[\mathcal{H}(Z)\mathcal{H}(Z)^T\right]$ is full rank then the model class is identifiable with respect to $D$.  

The following is the certificate of identifiability written in an abstract form involving polynomials to map parameters to feature space and polynomials to map data to feature space.  The proof does not require the model to be an $\text{MHLA}$, but we state it in $\text{MHLA}$ terms for the sake of concreteness. 
\begin{restatable}[General Certificate of Identifiability]{lemma}{generalCertificate} 
\label{lem:general-certificate}    
Let dataset $D = \{(Z_i, y_i)\}_{i \in [N]}$ be a dataset realizable by $\Theta \in \Omega_H$. Let $p \defeq \{p_a\}_{a \in [d]}$ be a collection of polynomials $p_a: \Omega \rightarrow \R^{\psi}$ mapping the parameters $\Theta \in \Omega$ to a feature space of fixed dimension $\psi \in \mathbb{Z}^+$.  Let $\mathcal{H} = \{\mathcal{H}_n\}_{n=1}^{\infty}$ be a uniform family of polynomials such that $\mathcal{H}_n: \R^{d \times n} \rightarrow \R^{\psi}$.  Let  $p$ and $\mathcal{H}$ satisfy 
\begin{equation}
    \text{MHLA}_{\Theta}(Z)[a] = \langle p_a(\Theta), \mathcal{H}_n(Z) \rangle
\end{equation}
for all $Z \in \R^{d \times n}$ for all $n \in [1,\infty)$. 
Then if $\lambda_{\min}\left(\E_D\left[\mathcal{H}(Z)\mathcal{H}(Z)^T\right]\right) > 0$ , we have
\begin{equation}
\text{MHLA}_{\Theta} = \text{MHLA}_{\Theta'}
\end{equation}
for all empirical risk minimizers $\Theta,\Theta' \in \Omega_{\text{ERM}}$.  That is, all empirical risk minimizers compute the same function.  
\end{restatable}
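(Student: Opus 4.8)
\textbf{Proof proposal for Lemma~\ref{lem:general-certificate}.}

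The plan is to exploit the hypothesis that $\text{MHLA}_\Theta(Z)[a] = \langle p_a(\Theta), \mathcal{H}_n(Z)\rangle$ to linearize the whole problem in feature space, and then argue that the empirical risk minimizers are forced to agree as linear functionals because $\Lambda_D$ is full rank. First I would observe that, since $D$ is realizable by some $\Theta^\star \in \Omega_H$, the minimum empirical risk is $0$; hence every empirical risk minimizer $\Theta \in \Omega_{\text{ERM}}$ achieves zero loss, i.e.\ $\text{MHLA}_\Theta(Z_i)[a] = y_{i,a}$ for all $i \in [N]$ and all $a \in [d]$. Rewriting via the feature identity, this says $\langle p_a(\Theta), \mathcal{H}(Z_i)\rangle = y_{i,a} = \langle p_a(\Theta^\star), \mathcal{H}(Z_i)\rangle$ for every datapoint, so that for any two empirical risk minimizers $\Theta,\Theta'$ we have $\langle p_a(\Theta) - p_a(\Theta'), \mathcal{H}(Z_i)\rangle = 0$ for all $i\in[N]$ and all $a\in[d]$.

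Next I would turn this per-sample orthogonality into a statement about the quadratic form defined by $\Lambda_D$. Fix $a$ and write $u \defeq p_a(\Theta) - p_a(\Theta') \in \R^\psi$. From the previous step $\langle u, \mathcal{H}(Z_i)\rangle = 0$ for every $i$, hence
\begin{equation}
u^T \Lambda_D\, u = \frac{1}{N}\sum_{i\in[N]} \langle u, \mathcal{H}(Z_i)\rangle^2 = 0.
\end{equation}
Since $\lambda_{\min}(\Lambda_D) > 0$, the matrix $\Lambda_D$ is positive definite, so $u^T\Lambda_D u = 0$ forces $u = 0$, i.e.\ $p_a(\Theta) = p_a(\Theta')$ for every $a \in [d]$. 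Finally I would use the feature identity once more in the forward direction: for any input $Z \in \R^{d\times n}$ and any coordinate $a$,
\begin{equation}
\text{MHLA}_\Theta(Z)[a] = \langle p_a(\Theta), \mathcal{H}_n(Z)\rangle = \langle p_a(\Theta'), \mathcal{H}_n(Z)\rangle = \text{MHLA}_{\Theta'}(Z)[a],
\end{equation}
which holds for all $n$ because $\mathcal{H}$ is a uniform family, and therefore $\text{MHLA}_\Theta = \text{MHLA}_{\Theta'}$ as functions (uniform circuit families).

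The only genuinely delicate point is the reduction in the first paragraph: one must be careful that ``empirical risk minimizer'' here means a minimizer of the loss $\mathcal{L}(\text{MHLA}_\Theta, D)$ over $\Omega_H$ (not over the relaxed space of unconstrained $W$), and that realizability guarantees the optimum value is exactly $0$ so that every ERM interpolates the data; if instead one allowed approximate minimizers this argument degrades gracefully and yields the quantitative bound of \lemref{lem:error-identifiability} rather than exact equality. Everything else is a routine positive-definiteness argument, so I expect no real obstacle beyond stating these conventions cleanly.
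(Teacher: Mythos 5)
Your proof is correct and follows essentially the same route as the paper's: linearize via the feature identity, use realizability to place every empirical risk minimizer's image $p_a(\Theta)$ among the minimizers of the induced linear regression, and use $\lambda_{\min}(\Lambda_D)>0$ to force uniqueness in feature space. Your version is in fact slightly more explicit than the paper's (which argues via uniqueness of the linear ERM set rather than writing out the interpolation and the quadratic form $u^T\Lambda_D u=0$), but the underlying argument is the same.
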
   

\begin{proof} \label{general-proof}
 We construct a map $p: \Omega \rightarrow \R^{\psi}$ such that $\text{MHLA}_{\Theta} = \text{MHLA}_{\Theta'}$ if and only if $p(\Theta) = p(\Theta')$.  Then we show that any empirical risk minimizer $\Theta_{\text{ERM}}$ and the ground truth $\bar{\Theta}$ satisfy $p(\Theta_{\text{ERM}}) = p(\bar{\Theta})$.  

In more detail, we construct some polynomials $\{p_a\}_{a \in [d]}$ and family of polynomials $\mathcal{H}$ such that 
\begin{equation}
 \text{MHLA}_{\Theta}(Z)|_a = \langle p_a(\Theta), \mathcal{H}(Z) \rangle  
\end{equation}

We construct a linear model class $\mathcal{R}$ that takes as parameters $v \in \R^{\psi}$ and data $\mathcal{H}(Z) \in \R^{\psi}$. 
such that

\begin{equation}
\mathcal{R}_v(\mathcal{H}(Z)) = \langle v, \mathcal{H}(Z) \rangle
\end{equation}

Let $\Theta_{\text{ERM}}$ be defined as 

\begin{equation}
\Theta_{\text{ERM}} \defeq \{\Theta' \in \Omega | \Theta' = \argmin_{\Theta \in \Omega} \E_{i \in [N]}\left[\mathcal{L}(\text{MHLA}_{\Theta}(Z_i), y_i)\}\right]
\end{equation}

Let $v_{\text{ERM}}$ be defined as  
\begin{equation}
v_{\text{ERM}} \defeq  \{v' \in \R^\psi | v' = \argmin_{v \in \R^\psi} \E_{i \in [N]}\left[\mathcal{L}(\mathcal{R}_{v}(\mathcal{H}(Z_i)), y_i) \right]\}
\end{equation}

Observe that for all $\Theta \in \Theta_{\text{ERM}}$, we have  $p(\Theta)\subseteq v_{\text{ERM}}$.  Here we use the fact that $y$ is realizable by the ground truth $\bar{\Theta}$.  Therefore if we show that $v_{\text{ERM}}$ is unique, i.e comprised of a single element then $p_{\text{ERM}} \defeq \{p(\Theta) | \Theta \in \Theta_{\text{ERM}}\}$ is also unique.  Therefore, $\text{MHLA}_{\Theta}$ is the same function for any $\Theta \in \Theta_{\text{ERM}}$   

To show $v_{\text{ERM}}$ is unique, all we need is that the second moment of the features $\Lambda_D = \E_{D}\left[\mathcal{H}(Z)\mathcal{H}(Z)^T\right]$ is positive definite (the covariance has a minimum eigenvalue bounded away from zero).      
\end{proof}
Next we prove the main certifiable identifiability lemma by instantiating the polynomials $\mathcal{H}$ and $p$ from \lemref{lem:general-certificate}.  
\certifiableIdentifiability*

\begin{proof} \label{certifiable-proof}
First we construct a polynomial $p: \Omega \rightarrow \R^{\psi}$ and $\mathcal{H}: \R^{d \times n} \rightarrow \R^{\psi}$ for $\psi = {d \choose 2}d + d^2$ such that 
\begin{equation}
 \text{MHLA}_{\Theta}(Z)[a] = \langle p_a(\Theta), \mathcal{H}(Z) \rangle  
\end{equation}

We begin by rewriting $\text{MHLA}_{\Theta}(Z)[a]$.  We index the first ${d \choose 2}d$ entries of $p_a(\Theta)$ by all pairs $\{j,k\}$ for $j,k \in [d]$ and all $\ell \in [d]$.  

\begin{equation}
    p_a(\Theta)_{\{j,k\},\{\ell\}} \defeq \sum_{h \in [H]}\left(V_{h,aj}Q_{h,k\ell} + V_{h,ak}Q_{h,j\ell}\right)  
\end{equation}
We define the entries of $p_a(\Theta)$ from $[{d \choose 2}d, {d \choose 2}d + d^2]$ as follows.   
\begin{equation}
    p_a(\Theta)_{\{j^2\}\{\ell\}} \defeq \sum_{h \in [H]}V_{h,aj}Q_{h,j\ell}  
\end{equation}

Similarly, we define $\mathcal{H}(Z)$ be be the following  ${d \choose 2}d + d^2$ features.  $\mathcal{H}(Z)_{\{j,k\} \{\ell\}}$ and $\mathcal{H}(Z)_{\{\ell\}}$.  
\begin{equation}
\mathcal{H}(Z)_{\{j,k\} \{\ell\}} \defeq \langle z_{j:}, z_{k:}\rangle z_ {\ell n} 
\end{equation}
and 
\begin{equation}
\mathcal{H}(Z)_{\{j^2\}\{\ell\}} \defeq \|z_{j:}\|^2 z_{\ell n}
\end{equation}

Thus we rewrite $\text{MHLA}_{\Theta}(Z)[a]$ as 
\begin{multline}
\text{MHLA}_{\Theta}(Z)[a] = \sum_{\{j,k\} \in \mathcal{S}^d_2}  \sum_{\ell \in [d]}p_a(\Theta)_{\{j,k\},\{\ell\}} \mathcal{H}(Z)_{\{j,k\} \{\ell\}} + \sum_{j,\ell \in [d]}  p_a(\Theta)_{\{j^2\}\{\ell\}} \mathcal{H}(Z)_{\{j^2\}\{\ell\}}\\
= \langle p_{a}(\Theta), \mathcal{H}(Z) \rangle 
\end{multline}
Here we introduce the notation $\mathcal{S}_2^d$ to denote the set of all pairs $\{j,k\}$ for $j,k \in [d]$.  We have constructed a polynomial $p_a(\Theta)$ such that for any $\Theta, \Theta' \in \Omega$ in the same equivalence class $p_a(\Theta) = p_a(\Theta')$, we have  $\text{MHLA}_{\Theta} = \text{MHLA}_{\Theta'}$. Furthermore, if there exists $b \in [n]$ such that $\lambda_{min}\left( \E_D\left[\mathcal{H}(Z)\mathcal{H}(Z)^T\right]\right) > 0$ then OLS returns a unique solution for $p_a(\Theta)$.  Since the data is realizable, we conclude $p_a(\Theta) = p_a(\bar{\Theta})$ for all $\Theta \in \Omega_{\text{ERM}}$.     
\end{proof}

Next we present the proof that realizability is not necessary to identify the function learned by MHLA with more than $d^2$ heads.   
\arbitraryHeads*

\begin{proof} \label{heads-proof}
We know from [lemma main algorithm] there exists a surjective map $p_a(\Theta)$ that takes $\Theta \in \Omega$ into $v \in \R^{\psi}$.  This implies that for all $v \in \R^{\psi}$ there exists a right inverse function $p^{r}(v) = \Theta$ satisfying $p(\Theta) = v$ given by SVD.  Therefore,  $p(\Theta_{\text{ERM}}) \in v_{\text{ERM}}$ i.e optimizing over $v \in \R^{\psi}$ does no better than optimizing over $\Theta \in \Omega$. 
To prove this consider the contrary that there exists $v' \in v_{\text{ERM}}$ and there is no $\Theta \in \Omega$ that achieves the same empirical risk as $v'$.  However, $p^r(v) \in \Omega$ is such a $\Theta$, and we have a contradiction.  The key point is that we avoid the assumption of realizability and replace it with surjectivity of the polynomials $p_a$.
\end{proof}

Finally we prove that data drawn from independent noise is certifiably identifiable. 
 A subtlety in the proof is that we use a somewhat different set of polynomials than \lemref{lem:certifiable-identifiability} as we center and normalize our features, which still satisfies the assumptions of the general certificate \lemref{lem:general-certificate}  
\independent*


\begin{proof} \label{independent-proof}
We give the entries of $\Lambda(Z)$ the following naming convention. 
Let the terms  $\{j,k\}\{\ell\}$ and pairs $\{j',k'\}\{\ell'\}$.  Terms that involve $\{j^2\}\{\ell\}$ and $\{j'^2\}\{\ell'\}$ are referred to as 'singles'.    
\begin{equation}
    \E\left[\mathcal{H}_b(Z)_{\{j,k\} \{\ell\} } \mathcal{H}_b(Z)_{\{j',k'\} \{\ell'\} } \right] = \frac{1}{n}\E\left[ \langle z_{j:}, z_{k:}\rangle \langle z_{j':}, z_{k':}\rangle z_{\ell b} z_{\ell'b}\right]
\end{equation}
We give entries of the following form the name "singles to singles"
\begin{equation}
\E\left[\mathcal{H}_b(Z)_{\{j^2\} \{\ell\} } \mathcal{H}_b(Z)_{\{j'^2\} \{\ell'\} } \right] = \frac{1}{n}\E[(\| z_{j:} \|^2 - nm_2)(\|z_{j':}\|^2 - nm_2) z_{\ell b}^2] 
\end{equation}

For the case of $Z$ drawn with each entry i.i.d $\nu$ we can proceed via case work.

\paragraph{Case 1: Pairs to Pairs, $j \neq k$ and $j' \neq k'$}
\begin{enumerate}
    \item \textbf{Subcase 1: $\{j,k\} \neq \{j',k'\}$ and $\ell = \ell'$:}
    \begin{equation}
        \frac{1}{n}\mathbb{E}[\langle z_{j:}, z_{k:}\rangle \langle z_{j':}, z_{k':}\rangle z_{\ell b} z_{\ell' b} ] = 0
    \end{equation}
    
    \item \textbf{Subcase 2: $\{j,k\} = \{j',k'\}$ and $\ell = \ell'$:}
    \begin{equation}
        \frac{1}{n}\mathbb{E}[\langle z_{j:}, z_{k:}\rangle^2 z_{\ell b}^2] = m_2^3
    \end{equation}
\end{enumerate}

\paragraph{Case 2: Singles to Singles, $j = k$ and $j' = k'$}
\begin{enumerate}
    \item \textbf{Subcase 1: $j \neq j'$ and $\ell = \ell'$:}
    \begin{equation}
        \frac{1}{n}\mathbb{E}\left[ \left( \|z_{j:}\|^2 - nm_2 \right)\left( \|z_{j':}\|^2 - nm_2 \right) z_{\ell b}^2 \right] = 0
    \end{equation}
    
    \item \textbf{Subcase 2: $j = j'$ and $\ell = \ell'$:}
    \begin{equation}
        \frac{1}{n}\mathbb{E}\left[\left(\|z_{j:}\|^2 - nm_2\right)^2 z_{\ell b}^2 \right] = \frac{1}{n}\left( (n^2 - n)m_2^2 + nm_4 - n^2m_2^2 \right) m_2 = (m_4 - m_2^2)m_2
    \end{equation}
\end{enumerate}

\paragraph{Case 3: Singles to Pairs, $j = k$ and $j' \neq k'$}
\begin{enumerate}
    \item \textbf{Subcase 1: $\ell = \ell'$:}
    \begin{equation}
        \frac{1}{n}\mathbb{E}\left[ \left( \|z_{j:}\|^2 - nm_2 \right)\langle z_{j':}, z_{k':} \rangle z_{\ell b}^2 \right] = 0
    \end{equation}
\end{enumerate}

Finally for the feature $\mathcal{H}(Z)_{\ell b} = m_2z_{\ell b}$ we have on the main diagonal $\E[m_2^2z_{\ell b}^2] = m_2^2$ and $0$ everywhere else.   

Therefore we've concluded that $\Lambda(Z)$ is a block diagonal matrix because the $\ell \neq \ell'$ blocks are near zero.  All that remains is to verify that the diagonal blocks are full rank. 
\begin{enumerate}
\item Pairs to Pairs: $m_2^3I$ is full rank with min eigenvalue $m_2^3$
\item Singles to Singles: $(m_4 - m_2^2)m_2I$ is full rank with min eigenvalue $(m_4 - m_2^2)m_2$.     
\end{enumerate}
\end{proof}

Finally we provide a simple error bound for approximate empirical risk minimizers to demonstrate the robustness of the conclusions in \lemref{lem:certifiable-identifiability}.  
\errorIdentifiability*

\begin{proof} \label{proof:error-identifiability}
\begin{multline}
\|\text{MHLA}_{\Theta}(Z) - \text{MHLA}_{\Theta'}(Z)\|^2 = \sum_{a \in [d]}\left(\langle p_a(\Theta) - p_a(\Theta'), \mathcal{H}(Z)\rangle\right)^2  \\ 
\leq  \sum_{a \in [d]} \| p_a(\Theta) - p_a(\Theta')\|^2 \|\mathcal{H}(Z)\|^2\\
\leq  \left(\sum_{a \in [d]} \| p_a(\Theta) - p_a(\Theta')\|^2 \right) \|Z\|_F^6\\
\leq  \frac{\epsilon}{\lambda_{min}\left(\Lambda_D\right)} \|Z\|_F^6\\
\end{multline}
Here the first equality follows from the linearization exhibited in \lemref{lem:general-certificate}.  The first inequality is cauchy schwarz.  In the second inequality we apply a crude upper bound that no more than 6'th degree polynomials that are products of three squares of entries in $Z$ are involved in $\|\mathcal{H}(Z)\|^2$.   

\begin{equation}
\|\mathcal{H}(Z)\|^2 \leq \sum_{a,a',a'' \in [d]\text{, }b,b',b'' \in [n]} Z_{ab}^2 Z_{a'b'}^2 Z_{a''b''}^2 \leq \|Z\|_F^6
\end{equation}
The last inequality comes from the fact that $\Theta,\Theta'$ are $\epsilon$ approximate empirical risk minimizers.  Therefore we know
\begin{equation}
\lambda_{min}(\Lambda_D) \sum_{a \in [d]}\|p_a(\Theta) - p_a(\Theta')\|^2 \leq \sum_{a \in [d]}\left(\langle p_a(\Theta) - p_a(\Theta'), \mathcal{H}(Z)\rangle\right)^2 \leq \epsilon 
\end{equation}
which implies 
\begin{equation}
 \sum_{a \in [d]}\|p_a(\Theta) - p_a(\Theta')\|^2 \leq \frac{\epsilon}{\lambda_{min}(\Lambda_D)} 
\end{equation}
which concludes the proof.  
\end{proof}

\begin{lemma}[Identifiability with Error and Noise in Realizability]
\label{lem:error-identifiability-noise} Let $D = \{(Z_i,y_i)\}_{i \in [N]}$ be a dataset such that $y_i = \text{MHLA}(Z_i) + \zeta_i$ for $\zeta_i$ i.i.d and bounded.  
Let $\Omega_{\epsilon-\text{ERM}}$ be the set of $\epsilon$-approximate empirical risk minimizers.  
\begin{equation} 
\Omega_{\epsilon-\text{ERM}} = \left\{ \Theta \in \Omega_H ~\big|~ \E_{(Z_i,y_i) \in D}\left[\left(\text{MHLA}_{\Theta}(Z_i) - y_i\right)^2\right] \leq \epsilon \right\}.
\end{equation}
Let $\max_{i \in [N]}\|Z_i\|_F \leq B$ . Then we have for any $\Theta,\Theta' \in \Omega_{\epsilon-\text{ERM}}$ that for all inputs $Z \in \R^{d \times n}$ 
\begin{equation}
\|\text{MHLA}_{\Theta}(Z) - \text{MHLA}_{\Theta'}(Z)\| \leq   \frac{\epsilon - \frac{1}{N}\sum_{i \in [N]} \zeta_i^2 + \frac{B^2}{N} \log(\delta^{-1})}{\lambda_{\min}\left(\Lambda_D\right)} \|Z\|_F^6.
\end{equation}
\end{lemma}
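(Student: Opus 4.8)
\textit{Proof plan.} The plan is to follow the argument of \lemref{lem:error-identifiability} essentially verbatim, inserting a single concentration step to absorb the label noise. Let $\bar\Theta$ be the parameters witnessing realizability with noise, so that $y_i = \text{MHLA}_{\bar\Theta}(Z_i) + \zeta_i$, and recall from \lemref{lem:certifiable-identifiability} the linearization $\text{MHLA}_{\Theta}(Z)[a] = \langle p_a(\Theta),\mathcal{H}(Z)\rangle$. For any $\epsilon$-approximate ERM $\Theta$, write $w_a \defeq p_a(\Theta) - p_a(\bar\Theta)$ and $g_a \defeq \frac{1}{N}\sum_{i\in[N]}\mathcal{H}(Z_i)\,\zeta_{i,a}$. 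Expanding the square in the empirical loss and using $\frac{1}{N}\sum_i \mathcal{H}(Z_i)\mathcal{H}(Z_i)^T = \Lambda_D$ gives
\[
\epsilon \;\ge\; \sum_{a\in[d]} w_a^T \Lambda_D\, w_a \;-\; 2\sum_{a\in[d]}\langle w_a, g_a\rangle \;+\; \frac{1}{N}\sum_{i\in[N]}\|\zeta_i\|^2 .
\]
The essential observation is that $g_a$ depends only on the data $\{(Z_i,\zeta_i)\}_{i\in[N]}$ and not on the ERM $\Theta$, so it can be controlled by one concentration bound rather than a union bound over parameter space.

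Next I would lower bound $\sum_a w_a^T \Lambda_D w_a \ge \lambda_{\min}(\Lambda_D)\sum_a\|w_a\|^2$, apply Cauchy--Schwarz to the cross term, and bound $\sum_a\|g_a\|^2$ by concentration: since $\|\mathcal{H}(Z_i)\zeta_{i,a}\| \le \|Z_i\|_F^3\,|\zeta_{i,a}| \le B^3\|\zeta_i\|_\infty$, a vector Hoeffding/Bernstein inequality gives $\sum_a\|g_a\|^2 = O\!\left(\tfrac{B^2}{N}\log(\delta^{-1})\right)$ with probability $1-\delta$, after folding the embedding dimension and the noise magnitude into constants (and, if convenient, using the centered/normalized features of \lemref{lem:independent}). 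Plugging these two bounds into the display turns it into a scalar quadratic inequality in $\sqrt{\sum_a\|w_a\|^2}$, whose solution yields
\[
\sum_{a\in[d]}\|p_a(\Theta)-p_a(\bar\Theta)\|^2 \;\le\; \frac{\epsilon - \frac{1}{N}\sum_{i\in[N]}\|\zeta_i\|^2 + \frac{B^2}{N}\log(\delta^{-1})}{\lambda_{\min}(\Lambda_D)} .
\]

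Applying this bound to both $\Theta$ and $\Theta'$ and using the triangle inequality in feature space transfers the same estimate (up to an absorbed constant) to $\sum_a\|p_a(\Theta)-p_a(\Theta')\|^2$. Finally, exactly as in the proof of \lemref{lem:error-identifiability}, for an arbitrary test input $Z\in\R^{d\times n}$,
\[
\|\text{MHLA}_{\Theta}(Z)-\text{MHLA}_{\Theta'}(Z)\|^2 = \sum_{a\in[d]}\langle p_a(\Theta)-p_a(\Theta'),\mathcal{H}(Z)\rangle^2 \le \Big(\sum_{a\in[d]}\|p_a(\Theta)-p_a(\Theta')\|^2\Big)\|\mathcal{H}(Z)\|^2 \le \Big(\sum_{a\in[d]}\|p_a(\Theta)-p_a(\Theta')\|^2\Big)\|Z\|_F^6,
\]
where the last step uses $\|\mathcal{H}(Z)\|^2 \le \|Z\|_F^6$; combining this with the previous display gives the claimed inequality.

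The main obstacle is bookkeeping rather than anything structural: one has to pick the concentration inequality and normalization so that the $g_a$-term comes out exactly as $\frac{B^2}{N}\log(\delta^{-1})$, and one should note that the bound is only informative when $\epsilon$ exceeds the empirical noise floor $\frac{1}{N}\sum_i\|\zeta_i\|^2$ by more than the Hoeffding slack, so that the numerator is nonnegative. Beyond that, the proof is the same linear-algebra manipulation used for \lemref{lem:error-identifiability} via the linearization of \lemref{lem:general-certificate}; the genuinely new ingredient is isolating and concentrating the data-only vector $g_a$.
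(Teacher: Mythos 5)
Your proposal is correct and takes essentially the same route as the paper: the paper's own proof of this lemma is a one-line deferral to Lemma~\ref{lem:error-identifiability} plus ``standard'' linear-regression noise handling, and your decomposition of the empirical loss into the quadratic form $\sum_a w_a^T\Lambda_D w_a$, the feature--noise cross term $g_a$, and the empirical noise floor $\frac{1}{N}\sum_i\|\zeta_i\|^2$, followed by concentration on the data-only vector $g_a$ and the $\|\mathcal{H}(Z)\|^2\le\|Z\|_F^6$ bound, is exactly that standard argument made explicit. The bookkeeping you flag (the $B^6$ vs.\ $B^2$ scaling from $\|\mathcal{H}(Z_i)\|\le\|Z_i\|_F^3$, the absorbed constants from solving the quadratic inequality, and the implicit probability-$1-\delta$ qualifier) reflects looseness in the lemma statement itself rather than a gap in your argument, so your treatment is if anything more careful than the original.
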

\begin{proof}
The proof follows directly from \lemref{lem:error-identifiability} but we incorporate the $\zeta_i$ terms as is standard in analyses of linear regression.  
\end{proof}

\clearpage

\section{Programs Expressible as Fixed Depth Linear Transformer}
\label{sec:expressibility}
In this section we build out examples of programs that can be expressed as fixed depth linear transformers.  Expressibility results can be carried out in a variety of equivalent ways.  The main takeaway, is that the computation history of TM $M$ on word $x$, when written down "step by step" can be captured by next token prediction of linear attention.  This is because the key-query-value naturally implements a table lookup sometimes referred to as "associative memory" or "in context linear regression" in the linear case.

The notion of an Autoregressive MHLA Program is useful for condensing the proofs of expressibility. We write such programs in an object oriented syntax with each token representing an object with multiple attributes.  Attributes can be updated and looked up from other objects using a generalized lookup akin to associative memory.  

\begin{algorithm}
\small
\caption{Autoregressive MHLA Program}
\label{alg:MHLA}
\begin{algorithmic}[1]
\STATE{Instantiate N instances OBJ = $\{\text{obj(i)}\}_{i \in [N]}$ of Class with set of Attributes $\{\text{Attr}_1, \text{Attr}_2, ..., \text{Attr}_k\}$}
\STATE{Each Attribute takes on values in an alphabet $\Sigma_{\text{Attribute}}$}
\FOR{iter $\in$ [T]}
\STATE{Let $\text{obj}[r]$ be the rightmost token}
\STATE{Let $\text{obj}[r+1]$ be a new token initialized with positional embedding $\text{obj}[r+1].\text{pos} = r+1$}
    \FOR{each \{AttrSource, AttrDest\} in \{Pairs of Attributes in Class\}}
        \STATE{\text{\#}AttrKey and AttrValue can be any pair of Attributes (and can be distinct from VarSource/VarDest)}
        \STATE{$\text{LookupDict} = \{\{\text{obj.AttrKey: obj.AttrValue}\} \text{ for obj in OBJ}\}$ }
        \STATE{\text{\#} if multiple objects have same obj.AttrKey then returns sum of obj.AttrValues which we aim to avoid}
            \STATE{Let $\mathcal{B}_Q$ be any function from $\Sigma_{\text{AttrSource}}$ to $\Sigma_{\text{AttrKey}}$}
            \STATE{Let $\mathcal{B}_V$ be any function from $\Sigma_{\text{AttrValue}}$ to $\Sigma_{\text{AttrDest}}$}
            \STATE{Let query = $\mathcal{B}_Q$(obj[r].AttrSource)}
            \IF{query in LookupDict.Keys}
                \STATE{obj[r+1].AttrDest = $\mathcal{B}_V$(LookupDict(query))}
            \ENDIF
    \ENDFOR
\STATE{Append next token $\text{OBJ} = \{\text{obj}[i]\}_{i \in [r]} \cup \{\text{obj}[r+1]\}$ }
\STATE{$r = r+1$}
\ENDFOR
\end{algorithmic}
\end{algorithm}

\begin{lemma} For any program $\mathcal{P}$ written in the form of \algref{alg:MHLA}, there exists corresponding MHLA parameters $\Theta \in \Omega_{H}$ such that $\text{MHLA}_{\Theta}(Z) = \mathcal{P}(Z)$.  
\end{lemma}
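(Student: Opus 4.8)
The plan is to exhibit, for every individual lookup rule that one iteration of the program $\mathcal{P}$ performs, a single linear-attention head that realizes exactly that rule, and then to induct on the autoregressive step count. I would first fix a \emph{slot encoding} of tokens: partition the embedding dimension $d$ into one block per attribute of the class (including the positional attribute), and inside the block for an attribute with alphabet $\Sigma$ represent a value $\sigma \in \Sigma$ by its one-hot vector $e_\sigma \in \R^{|\Sigma|}$. Let $P_{\mathrm a} \in \R^{|\Sigma_{\mathrm a}| \times d}$ be the coordinate projection onto the slot of attribute $\mathrm a$, so $P_{\mathrm a} z_i = e_{\text{obj}[i].\mathrm a}$. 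The one observation that drives the whole construction is that an \emph{arbitrary} function $\mathcal{B}$ between finite alphabets is a linear ($0$--$1$) map in one-hot coordinates: the matrix with $(\mathcal{B})_{ab} = \mathbf{1}[\mathcal{B}(b)=a]$ sends $e_b \mapsto e_{\mathcal{B}(b)}$.

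Second, I would build the head for one rule, specified by a source attribute $\mathrm s$, destination attribute $\mathrm d$, key attribute $\mathrm k$, value attribute $\mathrm v$, and functions $\mathcal{B}_Q:\Sigma_{\mathrm s}\to\Sigma_{\mathrm k}$, $\mathcal{B}_V:\Sigma_{\mathrm v}\to\Sigma_{\mathrm d}$. Take $Q_h \defeq P_{\mathrm k}^\top \mathcal{B}_Q P_{\mathrm s}$ and $V_h \defeq P_{\mathrm d}^\top \mathcal{B}_V P_{\mathrm v}$. Writing $z_n$ for the rightmost token ($\text{obj}[r]$, which is also the query column), a direct substitution gives the attention score $z_i^\top Q_h z_n = e_{\text{obj}[i].\mathrm k}^\top \mathcal{B}_Q e_{\text{obj}[n].\mathrm s} = \mathbf{1}\!\left[\text{obj}[i].\mathrm k = \mathcal{B}_Q(\text{obj}[n].\mathrm s)\right]$, i.e.\ precisely the membership test defining LookupDict for $\text{query} = \mathcal{B}_Q(\text{obj}[r].\mathrm s)$. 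Hence $Z\big(Z^\top Q_h z_n\big) = \sum_i z_i\,\mathbf{1}[\text{obj}[i].\mathrm k = \text{query}]$, which under the program's standing no-collision hypothesis equals $0$ if the query is absent and the unique matching token $z_{i^\star}$ otherwise; applying $V_h$ yields $P_{\mathrm d}^\top e_{\mathcal{B}_V(\text{obj}[i^\star].\mathrm v)}$ (and $0$ in the absent-query case), which is exactly ``write $\mathcal{B}_V(\text{LookupDict}(\text{query}))$ into slot $\mathrm d$, otherwise leave slot $\mathrm d$ at its initial value $0$.''

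Third, I would assemble $\Theta$ as the multiset of these heads — one per rule executed in a single iteration of $\mathcal{P}$ (the loop over attribute pairs with the program's chosen $\mathrm k,\mathrm v,\mathcal{B}_Q,\mathcal{B}_V$) — together with one additional head that fills in the new token's positional slot: choosing $\mathrm s = \mathrm k = \mathrm{pos}$ with $\mathcal{B}_Q=\mathrm{id}$ makes the head self-match to $\text{obj}[r]$ (positions are distinct within the horizon $T$), and choosing $\mathrm v = \mathrm d = \mathrm{pos}$ with $\mathcal{B}_V$ the successor map on $\{1,\dots,T\}$ writes $r+1$ there. Summing head outputs is consistent provided that for each destination slot at most one of the heads writing it has a satisfied lookup — the natural well-formedness condition on such programs, which I would state explicitly (it is foreshadowed by the in-line comments of Algorithm~\ref{alg:MHLA}). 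Then I would close with induction on $t$: $Z^0 = \text{Embed}(\text{OBJ}^0)$ by definition of the encoding, and if $Z^t$ is the slot encoding of the program state after $t$ iterations, the per-head analysis shows $\hat y = \text{MHLA}_\Theta(Z^t)$ is the slot encoding of $\text{obj}[r+1]$ produced by the inner loop, so $Z^{t+1} = Z^t \circ \hat y$ encodes the state after $t+1$ iterations; after $T$ steps this gives $\text{MHLA}_\Theta(Z) = \mathcal{P}(Z)$ with $H$ equal to the (program-dependent constant) number of operations per iteration.

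The slot-index bookkeeping and the substitution into $\hat y = \sum_h V_h Z(Z^\top Q_h Z[:,n])$ are routine; the genuinely delicate points are the two places where I must invoke the well-formedness of Algorithm~\ref{alg:MHLA}: collapsing $\sum_i z_i \mathbf{1}[\dots]$ to a single token (no key collisions among objects) and keeping heads that share a destination slot from firing at once. I expect pinning down exactly those hypotheses — rather than any computation — to be the main obstacle.
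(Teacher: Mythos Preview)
Your proposal is correct and follows essentially the same approach as the paper: both use a one-hot slot encoding of attributes, observe that any function between finite alphabets becomes a linear ($0$--$1$) map in one-hot coordinates, and allocate one head per lookup rule with $Q_h,V_h$ built as block matrices that place those linear maps in the appropriate attribute slots. Your version is in fact more thorough than the paper's brief sketch---you explicitly verify the attention-score indicator identity, add a dedicated head for the positional-slot successor, articulate the no-collision well-formedness hypotheses, and close with an induction over autoregressive steps, all of which the paper leaves implicit.
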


\begin{proof}
We set some matrices to implement lookup tables.  For any function of $f: A \rightarrow B$ for sets $A$ and $B$ there is a canonical representation of the input domain as orthogonal unit vector $v_1,v_2,..., v_{|A|} \in \R^{A}$ and output domain as another set of orthogonal unit vectors $u_1,u_2,...,u_{|B|} \in \R^{B}$.  Therefore, there is a matrix $G_f$ that maps input vectors to output vectors satisfying $G_f v_i = u_j$ for $j = f(i)$ for all $i \in [A]$ and $j \in [B]$.    

For functions $f: \Sigma_{\text{AttrSource}} \rightarrow \Sigma_{\text{AttrKey}}$ and $f': \Sigma_{\text{AttrValue}} \rightarrow \Sigma_{\text{AttrDest}}$ we associate matrices $B_Q \in \R^{|\Sigma_{\text{AttrSource}}| \times |\Sigma_{\text{AttrKey}}|}$ and $B_V \in \R^{|\Sigma_{\text{AttrValue}}| \times |\Sigma_{\text{AttrDest}}|}$ respectively.  

Then we form $\{V_h,Q_h\}_{h \in [H]}$ as follows.  Let $V$ be the matrix that is all zeros with $B_V$ in the rows associated with $\Sigma_{\text{AttrSource}}$ and the columns associated with $\Sigma_{\text{AttrKey}}$.  Let $Q$ be the matrix that is all zeros with $B_V$ in the rows associated with $\Sigma_{\text{AttrValue}}$ and the columns associated with $\Sigma_{\text{AttrDest}}$.

In each layer we have multiple heads, each one performs the lookup operation for each pair of attributes in the class. 
\end{proof}
\subsection{Construction of UTM}
Now we proceed with our construction of an Autoregressive MHLA-Program for UTM.  The UTM requires a small number of operations captured by an Autoregressive MHLA-Program.

We define an embedding function that takes as input a TM $M$ and word $x$ such that  
\begin{definition} [Embedding] Let $M$ be a TM over state space $Q$, alphabet $A$, transition function $\delta$.  Then 
\begin{equation}
\text{Embedding}(M) = \begin{bmatrix}
    q_0 & q_1 & \cdots & q_{k} &\# \\
    a_0 & a_0 & \cdots & a_0 & \# \\
    \delta(q_0,a_0) & \delta(q_1,a_0) & \cdots & \delta(q_{k}, a_0)& \#\\
    a_1 & a_1   & \cdots &  a_1&\#\\
     \delta(q_0,a_1) & \delta(q_1,a_1) & \cdots & \delta(q_{k}, a_1) &\#\\
\end{bmatrix}
\end{equation}
Let $p_1,p_2,...,p_{\delta}$ be "positional encodings" that assign unique id's for every letter in the word $x$.   
\begin{equation}
\text{Embedding}(x) = 
\begin{bmatrix}
    p_1 & p_2 & \cdots & p_i & p_{i+1} & \cdots & p_{\delta}&\# \\
    x_1 & x_2 & \cdots & x_i & x_{i+1} & \cdots & x_{\delta}&\# \\
    0 & 0 & \cdots & q & 0 & \cdots & 0 & \#
\end{bmatrix}
\end{equation}
Then we define $\text{Embedding(M,x)}$ to be

\begin{equation}
\text{Embedding(M,x)} = \begin{bmatrix}
\text{Embedding}(M) & 0 \\
0 & \text{Embedding}(x)
\end{bmatrix}
\end{equation}
\end{definition}
Henceforth we will write the construction in the syntax of an Autoregressive MHLA-Program instead of matrices with blocks of zeros and token embeddings to save space.  

\UTM*

The construction is given in the language of Autoregressive MHLA-Programs in \algref{alg:MHLA} which provides the instruction set for writing the next letter in the computation history onto the output tape.  
\begin{proof} \label{proof-expressibility}
\textbf{Proof Idea: } A few elementary operations can be captured by a MHLA-program which can be composed to output the computation history of $M$ on $x$.  We begin by introducing some notation for the "Lookup" operation which we build into copy, move, and if-then which are all the operations required to construct the UTM.    

\textbf{General Lookup: } For each lookup there are three objects that are involved. Let Token$ = \text{obj}[r]$ be the "source" which is always the rightmost token.  An attribute from the source object known as AttrSource is  linearly transformed to form a "query".  Lookup involves a table $T = \{\text{obj[i].AttrKey: obj[i].AttrValue}\}_{i \in [r]}$ which is used to match an AttrKey to look up an AttrValue from an object $\text{obj}[p]$ that we denote the "target".  Note, that if the obj[i] has an AttrKey that is zero, it is the same as not being in the table.  In the pseudocode \algref{alg:MHLA} these zero attributes are denoted as "None".

Given a query, we copy the associated AttrValue from the lookup table $T$ and update AttrDest in an object NextToken$ = \text{obj}[r+1]$ which we denote the "destination".  Multiple lookup operations can be performed in parallel by many heads with each head responsible for a single lookup.  

To output each letter of the computation history, we increase the number of tokens $r$ by a constant $c$.  We refer to the set of contiguous tokens $[0,c], [c, 2c], etc.$  involved in the computation of a single letter as a "block".  Here block[i] = $\{\text{obj}[j]\}_{j \in [ic,(i+1)c]}$.  We construct a different set of heads to act on each token and enforce that the nonzero rows that each block of tokens occupy are disjoint.  Furthermore, within a block, the states of each token occupies a disjoint set of rows except when they are used to construct a table.  Tables are the only case where we want tokens to occupy the same rows.  In this manner the following abstraction can be made. 

At the beginning of each block starting with obj[r], we can lookup attributes from anywhere in OBJ that we want to load into different attributes in obj[r].  Then we can apply any sequence of if-then statements involving the attributes of obj[r] to update the attributes (or create new attributes).   
To run the UTM we need a few simple primitives denoted Lookup and If-Then.  

\textbf{Construction of Primitives: } We write down the construction by constructing a sufficient set of primitives Lookup and If-Then.  We also include Copy which is a special case of Lookup that is used frequently.

\paragraph{Lookup: } When the transforms $B_Q$ and $B_V$ are the identity we denote the lookup operation for table $T$ where we query an attribute $s'$ of $\text{obj}[r]$ to update the attribute $s$ of $\text{obj[r+1]}$ as $\text{obj[r+1].s} = \text{Lookup(T,obj[r].s')}$    
\paragraph{Copy: }  A special case of lookup is copy, where we need to copy attributes from tokens that are at an offset $-k$ for $k \in [r]$.  This can be done by setting $\mathcal{B}_Q$ to permute the positional encoding by $-k$ positions. Then the query matches the key that is the positional encoding of the target object. Let $s,s'$ be target and destination attributes.  We denote the copy operation of the attribute $s'$ of the obj at offset $-k$ from $r$ into the attribute $s$ of the destination object to be $\text{obj[r+1].s} = \text{Copy}(\text{obj[r-k].s'})$. 
\paragraph{If-Then: } We write down an If-Then Program \algref{alg:if-then} and a corresponding Autoregressive MHLA-Program \algref{alg:MHLA-if-then} to implement If-Then.  An If-Then program looks up whether an attribute $x$ is equal to any of attributes $a_1,a_2,...,a_k$ then we set attribute $x'$ to $b_1,b_2,...,b_k$ respectively.  This is achieved by copying the attributes $a_i$ and $b_i$ into dummy attributes $s0$ and $s1$ for all $i$ in $k$ for a series of $k$ consecutive tokens.  This creates a table with key $s0$ and value $s1$.  Then we use attribute $x$ as the query, which looks up the corresponding value $s1$ which we use to update an attribute $x'$.   

\begin{algorithm}
\small
\caption{If-Then Program}
\label{alg:if-then}
\begin{algorithmic}[1]
\STATE{\# If attribute x is equal to any of $a_1,a_2,...,a_k$ then set attribute $x'$ to $b_1,b_2,...,b_k$ respectively}
\IF{Token.x == Token.$a_1$: }
    \STATE{NextToken.x' = Token.$b_1$}
\ENDIF
\IF{Token.x == Token.$a_2$: }
    \STATE{NextToken.x' = Token.$b_2$}
\ENDIF
\STATE{$\hdots$}
\IF{Token.x == Token.$a_k$: }
    \STATE{NextToken.x' = Token.$b_k$}
\ENDIF
\end{algorithmic}
\end{algorithm}

\begin{algorithm}
\small
\caption{MHLA If-Then Program}
\label{alg:MHLA-if-then}
\begin{algorithmic}[1]
\STATE{\# If attribute x is equal to any of $a_1,a_2,...,a_k$ then set attribute $x'$ to $b_1,b_2,...,b_k$ respectively}
\STATE{token[r+1].s0 = token[r].$a_1$}
\STATE{token[r+1].s1 = token[r].$b_1$}
\STATE{NEXT TOKEN $r = r+1$}
\STATE{token[r+1].s0 = token[r].$a_2$}
\STATE{token[r+1].s1 = token[r].$b_2$}
\STATE{$\hdots$}
\STATE{NEXT TOKEN $r = r+1$}
\STATE{token[r+1].s0 = token[r].$a_k$}
\STATE{token[r+1].s1 = token[r].$b_k$}
\STATE{NEXT TOKEN $r = r+1$}
\STATE{Table T = $\{\text{obj[i].s0 : obj[i].s1}\}_{i \in [r,r-k+1]}$}
\STATE{token[r+1].x' = Lookup(T,token[r].x)}
\end{algorithmic}
\end{algorithm}
\end{proof}
\begin{algorithm}
\small
\caption{Simplified Instruction Set MHLA Program for UTM for a single block}
\label{alg:MHLA}
\begin{algorithmic}[1]
\STATE{\# Initialize Lookup Tables for TM M and tape $T_1$}
\STATE{\# $\delta(q,a) = [\text{next-state, next-letter, next-move}]$}
\STATE{M = $\{q: [a_0, \delta(q,a_0), a_1, \delta(q,a_1)]\}_{q \in Q}$ }
\STATE{$T_1$ = $\{\text{token[i].PosEncoding: token[i].Letter}\}_{i \in [r]}$ }
\STATE{\# Begin Loading Information from M and previous tokens on tape}
\STATE{\# First copy letter/state from token -N-1 positions away}
\STATE{\# Attribute s(-1) = \{letter, state\} where state can be equal to None}
\STATE{NextToken.s(-1) = Copy(Token[-N-1].s0)}
\STATE{\# Second copy letter/state from token -N positions away}
\STATE{\# Attribute s0 = \{letter, state\} where state can be equal to None}
\STATE{NextToken.s0 = Copy(Token[-N].s0)}
\STATE{\# Third copy letter/state from token -N+1 positions away}
\STATE{\# Attribute s1 = \{letter, state\} where state can be equal to None}
\STATE{NextToken.s1 = Copy(Token[-N+1].s0)}
\STATE{NEXT TOKEN $r = r+1$}
\STATE{\#Split into three branches to handle left, head, and right positions relative to head}
\STATE{\textbf{RUN BRANCH 1} (Token is Left of Head Position) See \algref{alg:branches}}
\STATE{\textbf{RUN BRANCH 2} (Token is at Head Position)  See \algref{alg:branches}}
\STATE{\textbf{RUN BRANCH 3} (Token is Right of Head Position)  See \algref{alg:branches}}
\end{algorithmic}
\end{algorithm}

\begin{algorithm}
\small
\caption{Branches to handle cases Left of Head, Head, and Right of Head}
\label{alg:branches}
\begin{algorithmic}[1]
\STATE{\#Split into three branches to handle left, head, and right positions relative to head}
\STATE{\textbf{BRANCH 1} (Token is Left of Head Position)} 
\STATE{\# we have loaded a state q into s1 (if left of head) and next we load $[a_0, \delta(q,a_0), a_1, \delta(q,a_1)]$ into s2}
\STATE{NextToken.s2 = Lookup(M,Token.s1.state)}
\STATE{NEXT TOKEN $r = r+3$}
\IF{$\text{Token.s2.letter} == a_0$}
    \STATE{NextToken.s3 = $\delta(q,a_0)$ = [q',w',L/R]}
\ENDIF
\IF{$\text{Token.s2.letter} == a_1$}
    \STATE{NextToken.s3 = $\delta(q,a_1)$ = [q',w',L/R]}
\ENDIF
\STATE{NEXT TOKEN r = r+3}
\IF{Token.s3.move == L}
    \STATE{NextToken.return-letter = Token.s0.letter}
    \STATE{NextToken.return-state = q'}
\ENDIF
\IF{Token.s3.move == L}
    \STATE{NextToken.return-letter = Token.s0.letter}
    \STATE{NextToken.return-state = None}
\ENDIF
\STATE{\textbf{BRANCH 2} (Token is at Head Position)} 
\STATE{\# we have loaded a state q into s0 and next we load $[a_0, \delta(q,a_0), a_1, \delta(q,a_1)]$ into s2}
\STATE{NextToken.s2 = Lookup(M,Token.s0.state)}
\STATE{NEXT TOKEN r = r+3}
\IF{$\text{Token.s2.letter} == a_0$}
    \STATE{NextToken.s3 = $\delta(q,a_0)$ = [q',w',L/R]}
\ENDIF
\IF{$\text{Token.s2.letter} == a_1$}
    \STATE{NextToken.s3 = $\delta(q,a_1)$ = [q',w',L/R]}
\ENDIF
\STATE{NEXT TOKEN r = r+3}
\IF{Token.s3.next-letter is not None}
    \STATE{NextToken.return-letter = Token.s3.next-letter}
    \STATE{NextToken.return-state = None}
\ENDIF
\STATE{\textbf{BRANCH 3} (Token is Right of Head Position)} 
\STATE{\# we have loaded a state q into s(-1) and next we load $[a_0, \delta(q,a_0), a_1, \delta(q,a_1)]$ into s2}
\STATE{NextToken.s2 = Lookup(M,Token.s(-1).state)}
\STATE{NEXT TOKEN r = r+3}
\IF{$\text{Token.s2.letter} == a_0$}
    \STATE{NextToken.s3 = $\delta(q,a_0)$ = [q',w',L/R]}
\ENDIF
\IF{$\text{Token.s2.letter} == a_1$}
    \STATE{NextToken.s3 = $\delta(q,a_1)$ = [q',w',L/R]}
\ENDIF
\STATE{NEXT TOKEN r = r+3}
\IF{Token.s3.move == L}
    \STATE{NextToken.return-letter = Token.s0.letter}
    \STATE{NextToken.return-state = None}
\ENDIF
\IF{Token.s3.move == R}
    \STATE{NextToken.return-letter = Token.s0.letter}
    \STATE{NextToken.return-state = Token.s3.next-state}
\ENDIF
\end{algorithmic}
\end{algorithm}

\subsection{Proofs For Learning UTM}
\label{sec:learn-UTM}

\learningUTM*

\begin{corollary} \label{cor:learning-utm}
In particular, for sample complexity $N = \text{poly}(d, \epsilon^{-1}, \log(\delta^{-1}), n, t)$, by Lemma~\ref{lem:utm-expressibility}, we have with probability $1-\delta$ over the randomness in the data that the probability  that the $c(t)$ step of the computation history of $\text{MHLA}_{\hat{\Theta}}$ is equal to $x_t$ is
 \begin{equation}
\Pr\nolimits_{(M,x) \sim \mathcal{D}} \left[\text{CH}_{\hat{\Theta}}(M,x)^{c(t)}[:-k_t] = x^t\right] \geq 1 - \epsilon,
 \end{equation}
where $c(t) \leq O((n+t)t)$.  That is, the computation history of the $\text{MHLA}$ returned by \algref{alg:poly} is equal to the computation history of $M$ on $x$.   
\end{corollary}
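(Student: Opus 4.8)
The plan is to chain the in-distribution generalization bound of \lemref{lem:learning-utm} with the exact-simulation guarantee of \lemref{lem:utm-expressibility}, and to rescale the accuracy parameter so as to absorb the union-bound loss over the $O(\Phi)$ autoregressive steps.

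First I would run \Algref{alg:poly} on the dataset $D$ of autoregressive computation-history transitions generated by the ground-truth parameters $\Theta$ of \lemref{lem:utm-expressibility}. By \lemref{lem:learning-utm}, with probability $1-\delta$ over the draw of $D$ the output $\hat{\Theta} \in \Omega_{d^2}$ satisfies $\E_{(Z,y)\sim\mathcal{D}}[(\text{MHLA}_{\hat{\Theta}}(Z)-y)^2]\le \epsilon'$ with $N = \text{poly}(d,(\epsilon')^{-1},\log(\delta^{-1}))$ samples, and moreover
\begin{equation}
\Pr\nolimits_{(M,x)\sim\mathcal{D}}\!\left[\text{CH}_{\hat{\Theta}}(M,x)\neq\text{CH}_{\Theta}(M,x)\right]\le O(\epsilon'\Phi).
\end{equation}
On the complementary event the two $\Phi$-step autoregressive computation histories agree token-for-token at every step, so in particular their $c(t)$-th iterates coincide, and hence so do their restrictions to the last $k_t$ tokens, giving $\text{CH}_{\hat{\Theta}}(M,x)^{c(t)}[:-k_t]=\text{CH}_{\Theta}(M,x)^{c(t)}[:-k_t]$.

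Next I would apply \lemref{lem:utm-expressibility} to the ground truth $\Theta$: for every $(M,x)\in\Delta(\hat{\mathcal{Q}},\hat{\Sigma},\hat{n},\hat{\Phi})$ and every TM step $t$ one has $\text{CH}_{\Theta}(M,x)^{c(t)}[:-k_t]=x^t$ with $c(t)\le O((n+t)t)$, provided the autoregressive horizon is large enough that $\Phi\ge c(t)$ and the bounds $\hat{n},\hat{\Phi}$ are at least $n,t$ (this only enlarges the embedding dimension $d=O(\hat{n}\hat{\Phi}\hat{\Sigma}\max(\hat{\Sigma},\hat{\mathcal{Q}}))$ and the sequence length $n_{max}=O((n+t)t)$ polynomially). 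Composing the two facts, on the good event $\text{CH}_{\hat{\Theta}}(M,x)^{c(t)}[:-k_t]=x^t$ holds; therefore
\begin{equation}
\Pr\nolimits_{(M,x)\sim\mathcal{D}}\!\left[\text{CH}_{\hat{\Theta}}(M,x)^{c(t)}[:-k_t]=x^t\right]\ge 1-O(\epsilon'\Phi).
\end{equation}

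Finally I would take $\epsilon'$ to be a small enough constant multiple of $\epsilon/\Phi$ so that the failure probability is at most $\epsilon$; since $\Phi\le\hat{\Phi}=\text{poly}(n,t)$, the sample requirement $N=\text{poly}(d,(\epsilon')^{-1},\log(\delta^{-1}))$ becomes $N=\text{poly}(d,\epsilon^{-1},\log(\delta^{-1}),n,t)$, and because $d$ is itself polynomial in the problem parameters, \Algref{alg:poly} runs in polynomial time. The step requiring the most care is the upstream reasoning inside \lemref{lem:learning-utm}: confirming that the two-bit-precision rounding of \lemref{lem:utm-expressibility} makes the autoregressive trajectory robust to the small $\ell_2$ error guaranteed by \thmref{thm:learning}, so that a union bound over the $O(\Phi)$ steps genuinely controls the event that the \emph{entire} computation history is reproduced. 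Given that, the corollary itself follows by the elementary chaining and rescaling above, with the bound $c(t)\le O((n+t)t)$ inherited verbatim from \lemref{lem:utm-expressibility}.
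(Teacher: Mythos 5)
Your proposal is correct and follows essentially the same route as the paper: invoke \lemref{lem:learning-utm} (itself a union bound over the $O(\Phi)$ autoregressive steps applied to the guarantee of \thmref{thm:learning}), compose with the exact-simulation statement of \lemref{lem:utm-expressibility} for the ground-truth $\Theta$, and rescale $\epsilon \mapsto \epsilon/\Phi$ so the extra factor is absorbed into a sample complexity polynomial in $n$ and $t$. Your explicit flag about the rounding-robustness step inside \lemref{lem:learning-utm} is a fair observation --- the paper also treats that step informally --- but it does not affect the corollary, which is exactly the chaining-and-rescaling argument you give.
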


\begin{proof}
We have from \thmref{thm:learning} that \algref{alg:poly} returns $\hat{\Theta}$ such that 
\begin{equation}
\E_{(Z,y) \in \mathcal{D}}\left[\left(\text{MHLA}_{\hat{\Theta}}(Z) - y\right)^2\right] - \min_{\Theta \in \Omega_H}\E_{(Z,y) \in \mathcal{D}}\left[\left(\text{MHLA}_{\Theta}(Z) - y\right)^2\right]\leq \epsilon  
\end{equation}
Then to obtain an error bound on the $\Phi$ step computation history, which involves $O(n\Phi)$ tokens, we just observe that by union bound each step rounds to an incorrect set of tokens with probability less than $\epsilon$.  Therefore, over $O(\Phi)$ steps the error probability is upper bounded by $\epsilon \Phi$.  Equivalently 
\begin{equation}
\Pr_{(M,x) \sim \mathcal{D}}[\text{CH}_{\hat{\Theta}}(M,x) \neq \text{CH}_{\Theta}(M,x)] \leq O(\epsilon \Phi).
\end{equation}

Then proving \corref{cor:learning-utm} is a simple exercise.  For a larger sample complexity $N = \text{poly}(d, \epsilon^{-1}, \log(\delta^{-1}), n, t)$, by Lemma~\ref{lem:utm-expressibility}, we have that the probability that every token of the autoregressive computation history of $\text{MHLA}_{\hat{\Theta}}$ is equal to $x_t$ is 
 \begin{equation}
\Pr_{(M,x) \sim \mathcal{D}} \left[\text{CH}_{\hat{\Theta}}(M,x)^{c(t)}[:-k_t] = x^t\right] \geq 1 - \epsilon
\end{equation}

\end{proof}

\learnUTMcert*

\begin{proof} \label{proof:learnUTMcert}
The proof follows from the quantitative version of \lemref{lem:error-identifiability}.  Using the given that $\lambda_{min}(\Lambda_D) > \eta$, we conclude that  
 for any $\hat{\Theta} \in \Omega_{\epsilon-\text{ERM}}$ that for all inputs $Z \in \R^{d \times n}$ 
\begin{equation}
\|\text{MHLA}_{\hat{\Theta}}(Z) - \text{MHLA}_{\Theta}(Z)\| \leq   \frac{\epsilon}{\eta} \|Z\|_F^6.
\end{equation}

If we select a sufficiently small $\epsilon = 1/\text{poly}(d,N,|Q|,|\Sigma|,n,t,\eta^{-1})$ then we can ensure 
 \begin{equation}
\Pr_{(M,x) \sim \mathcal{D}} \left[\text{CH}_{\hat{\Theta}}(M,x)^{c(t)}[:-k_t] = x^t\right] \geq 1 - \epsilon
\end{equation}.  

The runtime then scales with $\text{poly}(d,N,|Q|,|\Sigma|,n,t,\eta^{-1})$ as desired.  
\end{proof}

\clearpage
\section{Additional Definitions}
\label{sec:add-def}

\begin{definition} [Orthogonal Embeddings] Let $\text{Embed}$ be a function $\text{Embed}: \Sigma \rightarrow \R^{|\Sigma|}$.  Let $\Sigma$ be an alphabet and let $e_1,e_2,...,e_{|\Sigma|} \in \R^{|\Sigma|}$ be a basis of orthogonal unit vectors.  Then for each letter $a$ in an alphabet $\Sigma$, we define $\text{Embed}(a) = e_a$ where we associate a different unit vector to each letter. 
\end{definition}

We adopt a naive "rounding" scheme for converting vectors into tokens.  This can be done in a variety of ways, and we choose to simply round the entries of the vector embeddings to the nearest token embedding.  
\begin{definition} [Rounding] For any vector $v = (v_1,v_2,...,v_d) \in \R^d$, let $\text{Round}(v) = e_j$ for $j = \argmax_{i \in [d]} \langle v, e_i\rangle$.  Since we use orthogonal unit vectors for token embeddings we will refer to $\text{Round}(v)$ as a token.  We will often refer to a matrix $Z \in \R^{d \times n}$ as being equivalent to a series of $n$ tokens $a_1,a_2,...,a_n$ to mean $\text{Round}(Z[:,i]) = a_i$ for all $i \in [n]$.  
\end{definition}

\begin{algorithm}
\small
\caption{Extract Features}
\label{alg:extract}
\begin{algorithmic}[1]
\STATE{\textbf{Input: } Data $D \defeq \{Z_i\}_{i \in [N]}$ for $Z_i \in \R^{d \times n_i}$ and $y_i \in \R^d$}
\FOR{$Z_i \in D$}
\STATE{Let $z_1,z_2,...z_d$ be the rows of $Z_i$ and let $z_{a,b}$ be the $(a,b)$ entry of $Z_i$}
\FOR{$j \in  [d]$ }
	\FOR{$k \in [d]$}
		\FOR{$\ell \in [d]$}
			\STATE{Let $\mathcal{X}_i \in \R^{d \times d^2}$ be defined as follows}
    			\STATE{$\mathcal{X}_i [j,kd + \ell] =  \left[\langle z_{j:}, z_{k:} \rangle z_{\ell n_i}\right]$}
		\ENDFOR
	\ENDFOR
\ENDFOR
\ENDFOR
\STATE{\textbf{Return: } $\{\mathcal{X}_i\}_{i \in [N]}$ such that \begin{equation}\label{eq:features}
\mathcal{X}_i \defeq \begin{bmatrix} \langle z_{1:}, z_{1:} \rangle z_{1 n_i} & \langle z_{1:}, z_{2:} \rangle z_{1 n_i} & \cdots & \langle z_{1:}, z_{d:} \rangle z_{1 n_i}& \cdots  & \langle z_{1:}, z_{d:}\rangle z_{dn_i} \\
\langle z_{2:}, z_{1:} \rangle z_{1 n_i} & \langle z_{2:}, z_{2:} \rangle z_{1 n_i} & \cdots & \langle z_{2:}, z_{d:} \rangle z_{1 n_i} & \cdots & \langle z_{2:}, z_{d:}\rangle z_{dn_i}\\
\vdots & \vdots & \ddots & \vdots & \ddots & \vdots \\
\langle z_{d:}, z_{1:} \rangle z_{1 n_i} & \langle z_{d:}, z_{2:} \rangle z_{1 n_i} & \cdots &  \langle z_{d:}, z_{d:} \rangle z_{1 n_i} & \cdots & \langle z_{d:}, z_{d:}\rangle z_{dn_i}\\
\end{bmatrix} ~ .
\end{equation}}
\end{algorithmic}
\end{algorithm}

\subsection{Training details of attention networks}\label{app:train-details-la}
We use Adam \cite{kingma2014adam} optimizer to train linear attention model \cref{eq:LA} and the full Transformer \cite{vaswani2017attention} models. 
\begin{center}
\begin{tabular}{ll}
    \toprule
    \textbf{hyper parameter} & \textbf{search space} \\ \midrule
$d$ input dimension  & [2, 4, 8, 16] \\
$m$ number of heads  & [1, 2, 4, 8, 16] \\
$n$ number of layers  & [1, 2, 4] \\
learning rate  & [0.01, 0.001] \\
batch size  & [32, 64] \\
optimizer & AdamW \cite{loshchilov2018fixing} \\
\bottomrule
\end{tabular}
\end{center}

\subsection{Training details in DFA Execution}\label{app:train-details-dfa}

We use the Llama variant of the Transformer arhitecture from \citet{touvron2023llama}. We run each setting with $N$ number of training examples with the following different values $ N \in \{16, 32, 64, 128, 256, 512, 1024, 2048, 4096, 6144, 8192, 12290, 16384, 20480, 32768, 65536\}$. The other hyper parameters are given in the below table.

\begin{center}
\begin{tabular}{ll}
    \toprule
    \textbf{hyper parameter} & \textbf{search space} \\ \midrule
$d$ input dimension  & [2048] \\
$m$ number of heads  & [16] \\
$n$ number of layers  & [4] \\
learning rate  & [0.00025] \\
epochs & 100 \\
optimizer & AdamW \cite{loshchilov2018fixing}\\
\bottomrule
\end{tabular}
\end{center}

\subsection{Additional Experiments}

\begin{figure}[H]
    \centering
    \begin{subfigure}{0.5\textwidth}
        \centering
        \includegraphics[width=\textwidth]{figs/gaussian_la/val_N=2048_d=4.pdf}
        \caption{$N=2048, d=4$}
        \label{fig:val-loss-n-2048-d-4}
    \end{subfigure}\hfill
    \caption{\textbf{Performance comparison of multi-head, multi-layer linear attention models and the original Transformer model (denoted as \emph{full})}. We trained using SGD on synthetic data generated from a single-layer linear attention model for varying training set sizes ($N$) and input dimensions ($d$), number of heads $m$, and number of layers $n$. We present mean squared error of the predictions w.r.t number of training epochs. Results demonstrate that multi-head architectures converge faster on different input dimensions and match the performance of our \algref{alg:poly} (convex algorithm). Increasing the number of layers or incorporating multilayer perceptrons (MLPs) and layer normalization did not yield consistent improvements. Shading indicates the standard error over three different runs.}
    \label{fig:val-loss-la}
\end{figure}

\subsection{Learning the Computation History of Deterministic Finite Automata}\label{sec:utmdfa}

Universal automata (like the universal Turing machine discussed in \cref{sec:learn-UTM}) receive descriptions of other automata as input, and simulate them to produce an output.
Here we empirically evaluate the ability of MHLA models to perform universal simulation of deterministic finite automata (DFAs). We limit our study to DFAs with a maximum number of states ($N$), alphabet size ($V$), and input length  ($L$). While recent work on in-context learning \citep{akyürek2024} has focused on inferring DFA behavior from input--output examples, here, we aim to simulate DFAs given explicit descriptions of their state transitions as input---a task somewhat analogous to \emph{instruction following} in large scale language models.

The construction in \lemref{lem:learning-utm} shows that a linear attention layer can output the polynomially bounded computation history of any TM (and therefore any DFA). Our construction requires embedding size linear with maximum length of computation history, number of states and alphabet size. Therefore, we predict the data requirements are polynomial in each of $N, V$ and $L$.

\begin{figure}[!htbp]
    \centering
    \includegraphics[width=0.99\linewidth]{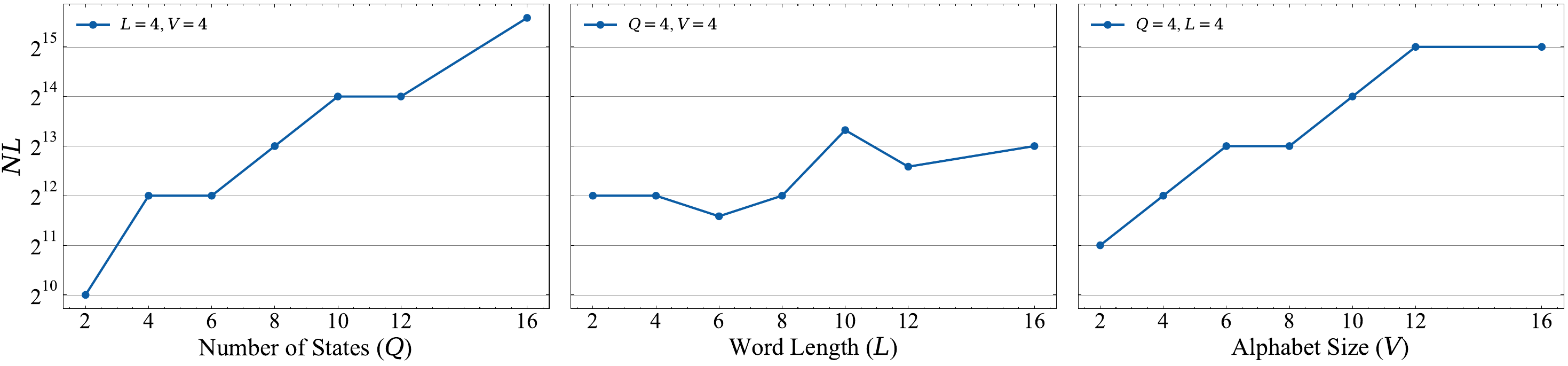}
    \caption{\textbf{Data requirement for universal DFA simulation:} We train a fixed sized Transformer (4-layers, 16 heads and 2048 hidden dimensions) to simulate a DFA given a transition table and input word. The vertical axis shows the number of tokens (expressed as word length $L$ times the number of examples $Q$) required to obtain 99\% next token accuracy.}
    \label{fig:dfa}
\end{figure}

\paragraph{Dataset}
Our dataset consists of strings containing three components: the input DFA's transition function $\delta: \mathcal{Q} \times \Sigma \rightarrow \mathcal{Q}$, the input word $x \in \Sigma^L$ and the computation history $h \in \mathcal{Q}^L$ which is the sequence of states visited in the DFA as it decides if $x$ is in its language. The first two components are the input to the model, while the computation history is the target output. We adopt the following schema for representing $\delta,x,$ and $h$:
$$\underbrace{(s_i, w, s_j), \dots, \forall_{s_i \in \mathcal{Q},w \in \Sigma} \in \delta}_{\text{DFA transition function}} \mid \underbrace{w_0 w_1 \dots w_L}_{\text{word}} \mid \underbrace{(s^0 w_0 s^1), (s^1 w_1 s^2) , \dots , (s^{L-1} w_L s^L)}_{\text{computation history}}$$
We encode each input-output relation in the transition function as a sequence of three tokens $(s_i, w, s_j)$ where $\delta(s_i,w) = s_j$.  We also include two parantheses to separate each triplet of tokens for a total of five tokens for each input-output relation.  The total description length of $\delta$ is then $5\mathcal{Q}\Sigma$. We encode word $x$ of length $L$ as a sequence of $L$ tokens.  Finally, we encode the computation history as the sequence of state transitions the DFA visits when deciding if $x$ is in its language.  Here we designate $s0$ as the start state, and let $s^i = \delta(s^{i-1},w^{i-1})$.  Each state transition is again represented by a triplet $(s, w, \delta(s,w))$.    
We train an autoregressive Transformer model using cross-entropy loss to predict the computation history tokens given the transition function and word. Please refer to \cref{app:train-details-dfa} for hyperparameter details.
\paragraph{Results}
In \cref{fig:dfa}, we vary each of the parameters $Q$, $L$ and $V$, while the other two parameters are fixed to a constant (in this case we fix them to be 4). Then, on the vertical axis, we display the minimum number of tokens (number of examples times the word length) required to get 99\% accuracy on the next token prediction. Plots are suggestive of a sub-exponential dependence on DFA complexity.

\end{document}